\newtheorem{assumption}{Assumption}
\newtheorem{claim}{Claim}
\newcommand \E {\mathop{\mbox{\ensuremath{\mathbb{E}}}}\nolimits}
\renewcommand \Pr {\mathop{\mbox{\ensuremath{\mathbb{P}}}}\nolimits}
\newcommand{\set}[1]{\left\{\, #1 \,\right\} }
\newcommand{\cset}[2]{\left\{\, #1 \mathrel{:} #2 \,\right\} }
\newcommand{\stat} {\tau}
\newcommand\Reals {{\mathds{R}}}
\newcommand\Naturals {{\mathds{N}}}
\newcommand{\data}[0]{\ensuremath{D}\xspace}
\newcommand{\eg}[0]{\emph{e.g.},\xspace}
\newcommand{\ie}[0]{\emph{i.e.},\xspace}
\newcommand \CV {{\mathcal{V}}}
\newcommand \CS {{\mathcal{S}}}
\newcommand \CX {{\mathcal{X}}}
\newcommand \Bay {\ensuremath{\mathscr{B}}}
\newcommand \Adv {\ensuremath{\mathscr{A}}}
\newcommand \Params {\Theta}
\newcommand \param {\theta}
\newcommand \mlparam {\theta^\star_{\textrm{ML}}}
\newcommand \Utils{\mathcal{U}}
\newcommand \util{u}
\newcommand \Queries {\mathcal{Q}}
\newcommand \query {q}
\newcommand \Responses {\mathcal{R}}
\newcommand \response {r}
\newcommand \family {\mathcal{F}_{\Params}}
\newcommand \bel {\xi}
\newcommand \Bels {\Xi}
\newcommand \marg {\phi}
\newcommand \dom {\nu}
\newcommand \defn {\mathrel{\triangleq}}
\newcommand \argmax{\mathop{\rm arg\,max}}
\newcommand \degree[1]{\mathop{\rm deg}(#1)}
\newcommand \abs[1] {\left|#1\right|}
\newcommand \onenorm[1]{\left\|#1\right\|_1}
\newcommand \norm[1]{\left\|#1\right\|}
\DeclareMathAlphabet{\mathpzc}{OT1}{pzc}{m}{it}
\newcommand \Beta {\mathop{\mathpzc{Beta}}\nolimits}
\newcommand \Binomial {\mathop{\mathpzc{Binom}}\nolimits}
\newcommand \Exp{\mathop{\mathpzc{Exp}}\nolimits}
\newcommand\ind[1]{\mathop{\mbox{\ensuremath{\mathbb{I}}}}\left\{#1\right\}}
\newcommand\tsum{\textstyle\sum}
\newcommand \BigO[1]{O\left(#1\right)}
\newcommand \KL[2] {D\left(#1\middle\|#2\right)}
\newcommand \partition {\mathscr{M}}
\newcommand \algebra {\mathfrak{S}}
\newcommand \field[1] {\algebra_{#1}}
\newcommand \nsamples {N}
\newcommand{\Parents}[1]{\mathcal{P}(#1)}
\newcommand \xdistChar{\rho}
\newcommand \xdist[2]{\xdistChar(#1, #2)}
\newcommand \dist[2]{D\left(#1 ~\middle\|~ #2 \right)}
\newcommand\dd{\,\mathrm{d}}
\def\clap#1{\hbox to 0pt{\hss#1\hss}}
\def\mathrlap{\mathpalette\mathrlapinternal}
\def\mathrlapinternal#1#2{%
           \rlap{$\mathsurround=0pt#1{#2}$}}
\newcommand \comment[1] {{\color{red}{\texttt{[#1]}}}}
\newcommand{\Ie}{\emph{I.e.,}\xspace}
\newcommand \constScale {\omega}
\newcommand \constFamily {C_\bel^{\family}}
\newenvironment{proofof}[1]{\par\noindent{\bf Proof of {#1}\ }}{\hfill\BlackBox\\[2mm]}
\begin{document}
% first the title is needed
\title{Bayesian Differential Privacy through Posterior Sampling\thanks{A preliminary version of this paper appeared in {\em Algorithmic Learning Theory 2014}~\citep{alt:robust}. This version corrects constant factors in the upper bounds and introduces new material on utility analysis and lower bounds.}}

\author{\name Christos Dimitrakakis \email christos.dimitrakakis@gmail.com \\
	\addr
    University of Lille, F-59650 Villeneuve-d'Ascq, France\\
    Harvard University, Cambridge MA-02138, USA\\
	Chalmers University of Technology, SE-412 96, Gothenburg, Sweden
  \AND  \name Blaine Nelson \email blaine.nelson@google.com \\
        \addr Google, Inc.\\
	1600 Amphitheatre Parkway \\
	Mountain View, CA 94043, USA
  \AND  \name Zuhe Zhang \email zhang.zuhe@gmail.com \\
        \addr School of Mathematics \& Statistics \\
        The University of Melbourne \\
        Parkville, VIC 3010, Australia
  \AND  \name Aikaterini Mitrokotsa \email aikmitr@chalmers.se \\
        \addr Department of Computer Science \& Engineering \\
	Chalmers University of Technology \\
	SE-412 96, Gothenburg, Sweden
  \AND  \name Benjamin I. P. Rubinstein \email brubinstein@unimelb.edu.au \\
        \addr Department of Computing \& Information Systems \\
        The University of Melbourne \\
	Parkville, VIC 3010, Australia
}
\editor{Charles Elkan}

\maketitle

\begin{abstract}% (adding this seems to remove the indent --> good)
  Differential privacy formalises privacy-preserving mechanisms that
  provide access to a database. We pose the question of whether
  Bayesian inference itself can be used directly to provide private
  access to data, with no modification.  The answer is affirmative:
  under certain conditions on the prior, sampling from the posterior
  distribution can be used to achieve a desired level of privacy and
  utility. To do so, we generalise differential privacy to arbitrary
  dataset metrics, outcome spaces and distribution families.  This
  allows us to also deal with non-i.i.d or non-tabular datasets. We
  prove bounds on the sensitivity of the posterior to the data, which
  gives a measure of robustness. We also show how to use posterior
  sampling to provide differentially private responses to queries,
  within a decision-theoretic framework.  Finally, we provide bounds
  on the utility and on the distinguishability of datasets. The latter
  are complemented by a novel use of Le Cam's method to obtain lower
  bounds.  All our general results hold for arbitrary database
  metrics, including those for the common definition of differential
  privacy. For specific choices of the metric, we give a number
  of examples satisfying our assumptions.
\end{abstract}

\begin{keywords}
Bayesian Inference, Differential Privacy, Robustness, Adversarial Learning
\end{keywords}

\section{Introduction}

The Bayesian framework for statistical decision theory incorporates uncertainty into decision making in a probabilistic manner. This makes it attractive, as predictions and modelling can all be made with the machinery of probability. More specifically, a Bayesian statistician begins by assuming that the world is described by a probabilistic model within some family, and he assigns a prior belief to each one of the models. After observing data, this belief is adjusted through Bayes's theorem to the so called posterior belief. This expresses the statistician's conclusion given the data and the prior assumptions. The statistician can then release the posterior to the world, for others to build upon, or use for principled decision making under uncertainty.

Unfortunately, it is frequently the case that the data acquired by the statistician is sensitive. Consequently, there is a fear that any information released by the statistician that depends on the data---be that the posterior distribution itself or any decisions that follow from the calculated posterior---may reveal sensitive information in the original data. Recently, the framework of differential privacy has been proposed to codify this leakage of information. If an algorithm is differentially private, then its output can only leak a bounded amount of information about its input.

We are interested in the question of how we can build differentially-private algorithms within the Bayesian framework. More precisely, we examine when the choice of prior is sufficient to guarantee differential privacy for decisions that are derived from the posterior distribution. Our work builds a unified understanding of privacy and learning in adversarial environments, under a decision-theoretic framework. We show that under suitable assumptions, standard Bayesian inference and posterior sampling can achieve uniformly good utility with a fixed privacy budget in the differential privacy setting. We also indicate strong connections between robustness and privacy.

In this paper, we show that the Bayesian statistician's choice of
prior distribution ensures a base level of data privacy through the
posterior distribution; the statistician can safely respond to
external queries using samples from the posterior. When estimating a
linear model from sensitive data, for example, samples from the
posterior correspond to different possible fits. The more samples
used, the more privacy is leaked, while query responses may be more
accurate. Our proposed approach complements existing mechanisms rather
well, and may be particularly useful in situations where Bayesian
inference is already in use. For this reason, we provide illustrative
examples in the exponential family. However, our setting is wholly
general and not limited to specific distribution families, or
i.i.d. observations. Any family could be chosen: so long as it either
satisfies our assumptions directly, or can be restricted so that it
does.  For example, our framework applies to families of discrete Bayesian
networks with directed-acyclic topologies (\eg Markov chains; see Lemma~\vref{lem:dbn})
and multivariate Gaussians (see Lemma~\ref{lem:multinormal}), where
the observations may not satisfy the i.i.d. assumption.

\emph{Summary of setting.} A Bayesian statistician (\Bay) wishes to communicate results about data $x$ to a third party (\Adv), but without revealing the data $x$ itself. We make no assumptions on the  data $x$, which could be a single observation, an i.i.d. sample, or a sequence of observations. The protocol of interaction between \Bay{} and \Adv{} is summarised below.
\begin{enumerate}
\item \Bay{} selects a model family ($\family$) and a prior ($\bel$).
\item \Adv{} is allowed to see $\family$ and $\bel$ and is computationally unbounded.
\item \Bay{} observes data $x$ and calculates the posterior $\bel(\param \mid x)$ but does not reveal it.\\ % Instead, \Bay{} draws $n$ samples of the parameter $\param$ from the posterior distribution, forming a sample $\hat{\Params}$.
Then, for steps $t = 1, 2, \ldots$, repeat the following:
\item \Adv{} sends his utility function $\util$ and a query $\query_t$ to \Bay{}.
\item \Bay{} responds with the response $\response_t$ maximising $\util$, in a manner that depends on the query and the posterior.
\end{enumerate}
Let us now elaborate. In this framework, the choice of the model family $\family$ is
dictated by the problem. The choice of $\bel$ is normally determined
by the prior knowledge of \Bay{}, but we show that this also affects
what level of privacy is achieved. Informally speaking, informative
priors achieve better privacy, as the posterior has a weaker
dependency on the data. It is natural to assume that the prior itself
is public, as it should reflect publicly available information. The
statistician's conclusion from the observed data $x$ is then
summarised in the posterior distribution $\bel(\param \mid x)$, which
remains private.

The second part of the process is the interaction with \Adv{}. We
adopt a decision-theoretic viewpoint to characterise what the optimal
responses to queries should be.  More specifically, we assume the
existence of a ``true'' parameter $\param \in \Params$, and that \Adv{}
has a utility function $\util_\param(\query_t, \response_t)$, which he
wishes to maximise. For example,
consider the case where $\param = (\mu, \Sigma)$ are the parameters of a normal
distribution. An example query $\query_t$ is \emph{``what is the expected value $\E_\param x_i = \mu$ of the distribution?''}. The optimal response $\response_t$, would then be a real
vector that depends on the utility function. A possible utility function is the negative squared
$L_2$ distance:
\[
u_\theta(\query_t = \textrm{``what is the mean?"}, \response_t) = - \|\E_\theta x_i - \response_t\|_2^2.
\]
While $\param$ is unknown, \Bay{} has information about it in the form
of a posterior distribution. Using standard
decision-theoretic
notions, the optimal response of $\Bay$ would maximise the expected
utility $\E_\bel(\util \mid \query_t, \response_t, x)$, where the
expectation is taken over the posterior distribution. However, this deterministic response cannot be differentially private.

In this paper, we promote the use of \emph{posterior sampling} to
respond to queries.
The posterior sampling mechanism draws a set $\hat{\Params}$ of
i.i.d. samples from the posterior distribution. Then, all the responses
only depend on the posterior through $\hat{\Params}$.  Since our
algorithm only takes a single sample set $\hat{\Params}$, further
queries by the adversary reveal nothing more about the data than what can
be inferred from $\hat{\Params}$. The empirical distribution induced by 
$\hat{\Params}$ serves as a private surrogate for the exact (non-private)
posterior. Consequently, we can respond to an
arbitrary number of queries with a bounded privacy budget, while
guaranteeing good utility for all responses.

We show that if $\family$ and $\bel$ are chosen
appropriately, this results in differentially-private responses, as
well as robustness of the posterior.\footnote{More specifically, that
  small changes in the data result in small changes in the posterior
  in terms of the KL divergence.}  In addition, we prove upper and
lower bounds on how easy it is for an adversary to distinguish two
$\epsilon$-close datasets. Finally, we bound the loss in utility
incurred due to privacy.
The intuition behind our results is that robustness and privacy are linked via smoothness. Learning algorithms that are smooth mappings---their output (\eg a spam filter) varies little with perturbations to input (\eg similar training corpora)---are robust: outliers have reduced influence, and adversaries cannot easily discover unknown information about the data. This suggests that robustness and privacy can be simultaneously achieved and are in fact deeply linked.

We provide a uniform mathematical treatment of the privacy and
robustness properties of Bayesian inference based on generalised
differential privacy to arbitrary dataset distances, outcome spaces,
and distribution families.  This paper can be summarised as making the following
distinct contributions:
\begin{itemize}
\item Under certain regularity conditions on the prior distribution $\bel$ or likelihood family $\family$,
we show that the posterior distribution is \emph{robust}: small changes in the dataset result in small posterior changes.
\item We introduce a novel \emph{posterior sampling mechanism} that is private.\footnote{Although previously used \eg for efficient exploration in reinforcement learning~\citep{thompson1933lou,osband:thompson:nips:2013}, posterior sampling has not previously been employed for privacy.}  Unlike other common mechanisms in differential privacy,
our approach sits squarely in the non-private (Bayesian) learning framework without modification.
\item We provide necessary and sufficient conditions for differentially private Bayesian inference.
\item We introduce the notion of \emph{dataset distinguishability} for which we provide finite-sample bounds for our mechanism: how large would $\hat{\Params}$ need to be for \Adv{} to distinguish two datasets with high probability.
\item We provide examples of conjugate-pair distributions where our assumptions hold, including discrete Bayesian networks.
\end{itemize}

\emph{Paper organisation.}
Section~\ref{sec:setting} specifies the setting and our assumptions.
 Section~\ref{sec:robustness} proves results on robustness of Bayesian learning. Section~\ref{sec:privacy} proves our main privacy results. In particular, Section~\ref{sec:diff-priv-post} shows that the posterior distribution is differentially private, Section~\ref{sec:post-sampl-query} describes our posterior sampling query response algorithm, Section~\ref{sec:bound-post-query} derives bounds on dataset indistinguishability, Section~\ref{app:le-cam} shows how to obtain matching lower bounds for distinguishability, while Section~\ref{sec:trad-util-priv} shows how utility and privacy can be traded off within our framework.
Examples where our assumptions hold are given in Section~\ref{sec:examples}. We present a discussion of our results, related work and links to the exponential mechanism and robust Bayesian inference in Section~\ref{sec:conclusion}.
Appendix~\ref{sec:proofs} contains proofs of the main theorems.
Finally, Appendix~\ref{sec:example-proofs} details proofs of the examples demonstrating our assumptions.

\section{Problem Setting}
\label{sec:setting}
We consider the problem of a Bayesian statistician (\Bay{}) communicating with an untrusted third party (\Adv{}). \Bay{} wants
to convey useful responses to the queries of \Adv{} (\eg how many people suffer from a disease or vote for a particular party) without revealing private information about the original data (\eg whether a particular person has cancer).  This requires communicating information in a way that strikes a balance between utility and privacy. In this paper, we study the inherent privacy and robustness properties of Bayesian inference and explore the question of whether \Bay{} can select a prior distribution so that a computationally unbounded \Adv{} cannot obtain private information from queries.

\subsection{Definitions and Notation}

We begin with our notation. Let $\CS$ be the set of all possible
datasets.  For example, if $\CX$ is a finite alphabet, then we might
have $\CS = \bigcup_{n=0}^\infty \CX^n$, \ie the set of all possible
observation sequences over $\CX$. However, $\CS$ can have arbitrary
structure and so social network or mobility trace data are also
handled in this framework.  Probability measures on parameters $\param$ are usually
denoted by $\bel$, while measures and densities on data are denoted by
$P_\param$ or $p_\param$ respectively. Expectations are denoted by
$\E_\bel g \defn \int_\Params g(\param) \dd \bel(\param)$, where the
subscript denotes the underlying distribution with respect to which we
are taking expectations. In case of ambiguity, we explicitly write \eg
$\E_{x \sim P_\param} f(x) = \int_\CS f(x) \dd P_\param(x)$ to denote
which variables are drawn from which distributions.  Finally, we use
$\ind{\pi}$ to be the identity function, taking the value $1$ when the
predicate $\pi$ is true, and $0$ otherwise.

\subsubsection{Distances Between Datasets} Central to the notions of privacy and robustness, is the concept of distance between datasets. Firstly, the effect of dataset perturbation on learning depends on the amount of noise as quantified by some distance. This is useful for characterising robustness to noise or adversarial manipulation of the data. Secondly, the amount that an attacker can learn from queries can be quantified in terms of the distance of his guesses to the true dataset. Finally, it allows for a unified mathematical treatment, as it permits different types of neighbourhoods to be defined. To model these situations, we equip $\CS$ with a pseudo-metric\footnote{Meaning that $\xdist{x}{y} = 0$ does not necessarily imply $x = y$.} $\xdistChar : \CS \times \CS \to \Reals_+$. This generalisation has also been used by~\citet{PET:DP}, which has laid the groundwork for metric-based differential privacy. While this concept has many applications in the context of geographical information systems, we apply this generalisation of differential privacy without necessarily referring to some underlying physical distance.

%This is required for a notion of similarity between datasets. Intuitively, two datasets $x, y$ with small distance $\xdist{x}{y}$ will result in similar outputs for the learning algorithm.

\subsubsection{Bayesian Inference}
This paper focuses on the \emph{Bayesian inference} setting, where the statistician $\Bay{}$ constructs a posterior distribution from a prior distribution $\bel$ and a training dataset $x$. More precisely, we assume that data $x \in \CS$ have been drawn from some distribution $P_{\theta^\star}$ on $\CS$, parameterised by $\theta^\star$, from a family of distributions $\family$.
$\Bay{}$ defines a parameter set $\Params$ indexing a family of distributions $\family$ on $(\CS, \field{\CS})$, where $\field{\CS}$ is an appropriate $\sigma$-algebra on $\CS$:
\begin{equation*}
  \family \defn \cset{P_\param}{\param \in \Params},
  %\label{eq:family}
\end{equation*}
and where we use $p_\param$ to denote the corresponding densities\footnote{\Ie the Radon-Nikodym derivative of $P_\param$ relative to some dominating measure $\dom$.} when necessary.
To perform inference in the Bayesian setting, $\Bay{}$ selects a prior measure $\bel$ on $(\Params, \field{\Params})$ reflecting \Bay{}'s subjective beliefs about which $\theta$ is more likely to be true, {\em a priori}; \ie for any measurable set $B \in \field{\Params}$, $\bel(B)$ represents \Bay{}'s prior belief that $\theta^\star \in B$.
In general, the posterior distribution after observing $x \in \CS$ is:
\begin{align}
  \label{eq:posterior}
  \bel(B \mid x) = \frac{\int_B p_\param(x) \dd{\bel}(\param)}{\marg(x)}
%, &\qquad
%  &&
%  x \in \CS
  \enspace
  ,
\end{align}
where $\marg$ is the corresponding marginal density given by:
\begin{align*}
  \marg(x)
  & \defn \int_\Params p_\param(x) \dd{\bel}(\param)%,
%  &&
%  x \in \CS
  \enspace.
\end{align*}
While the choice of the prior is generally arbitrary, this paper shows that its careful selection can yield good privacy guarantees. Throughout the paper, we shall use the following simple example to ground our observations and theory. This consists of a finite family of distributions, on a finite alphabet. Consequently, calculation of the posterior distribution is always simple. It is also easy to verify our assumptions on this model.

\begin{example}[Finite Bernoulli family.]
Consider a finite family of distributions $\family = \cset{P_\param}{\param \in \Params}$ on alphabet $\CX = \set{0, 1}$, with $\param \in [0,1]$, such that for any model in the family and any observation $x$
\[
P_\param(x) = \param \ind{x = 1} + (1 - \param) \ind{x = 0}.
\]
For any sequence of observations $x_1, \ldots, x_T$, we have, with some abuse of notation,
\[
P_\param(x_1, \ldots, x_T) = \prod_{t=1}^T P_\param(x_t),
\]
\ie $P_\param$ defines an i.i.d. distribution on the alphabet.
\label{ex:finite}
This family corresponds to a set of Bernoulli models.  The set of
parameters $\Params$ will be chosen to discretise the parameter space
of Bernoulli distributions over $\Delta$-sized intervals. For this, the $k$-th
model's parameter will be
$\theta_k = \Delta k$,
with $\Delta \in (0,1)$
and $k \leq 1/\Delta$.
\end{example}
For the above family, we can use a uniform prior distribution $\bel(\theta_k) = \Delta$. The posterior distribution is easily calculated, since we need only sum over a finite number of parameters.

\subsubsection{Privacy} We now recall the concept of differential privacy~\citep{DworkOnly06}. This states that on \emph{neighbouring} datasets, a randomised query response mechanism yields (pointwise) similar distributions.
We adopt the view of mechanisms as conditional distributions %---similar to \citet{duchi:local-privacy}---
under which differential privacy
can be seen as a measure of smoothness.
In our setting, conditional distributions conveniently correspond to posterior distributions. These can also be interpreted as the distribution of a mechanism that uses posterior sampling, to be introduced in Section~\ref{sec:post-sampl-query}.
The precise definition depends on the notion of neighbourhood, with the following choice being common:
\begin{definition}[$(\epsilon, \delta)$-differential privacy]
\label{def:dp-old}
A conditional distribution $P(\cdot \mid x)$ on $(\Params, \field{\Params})$ is \emph{$(\epsilon, \delta)$-differentially private} if, for all $B \in \field{\Params}$ and for any $x \in \CS = \CX^n$
\[
  P(B \mid x) \leq e^\epsilon P(B \mid y) + \delta,
\]
for all $y$ in the \emph{Hamming-$1$ neighbourhood} of $x$. That is,
$y$ may differ in at most one entry from $x$:
there is at most one $i \in \{1, \ldots, n\}$ such that $x_i \neq y_i$.
\end{definition}
A typical situation where this definition is employed, is when $x, y$ are matrices and $x_i$ is a single row in the matrix. Then, the datasets are neighbours if a matrix row is changed.\footnote{Another common choice for neighbourhoods is to say that two datasets are neighbours if one results from the other by addition of a row.}

In our setting, it is reasonable to generalise this to arbitrary dataset spaces $\CS$ that are not necessarily product spaces. To do so, we use the notion of differential privacy under a pseudo-metric $\xdistChar$ on the space of all datasets, which allows for more subtle representations of attacker knowledge and for a more general treatment:
\begin{definition}[$(\epsilon, \delta)$-differential privacy under $\xdistChar$.]
\label{def:dp-new}
A conditional distribution $P(\cdot \mid x)$ on $(\Params, \field{\Params})$ is \emph{$(\epsilon, \delta)$-differentially private under} a pseudo-metric $\xdistChar : \CS \times \CS \to \Reals_+$ if, for all $B \in \field{\Params}$ and for any $x, y \in \CS$,
\[
  P(B \mid x) \leq e^{\epsilon \xdist{x}{y}} P(B \mid y) + \delta \xdist{x}{y}\;.
\]
\end{definition}
In our setting, $\xdistChar$ replaces the notion of neighbourhood. It is of course possible to use $\xdistChar$ that corresponds to the usual meaning of neighbourhood in differential privacy:
\begin{remark}
  If $\CS = \CX^n$ and $\xdist{x}{y} = \sum_{i=1}^n \ind{x_i \neq y_i}$ is the Hamming distance, this definition is analogous to standard $(\epsilon, \delta)$-differential privacy. When considering only $(\epsilon, 0)$- differential privacy or $(0,\delta)$-privacy, it is an equivalent notion.\footnote{Making the definition wholly equivalent is possible, but results in an unnecessarily complex definition.}
\end{remark}
\begin{proof}
  For $(\epsilon, 0)$-DP, let $\rho(x,z) = \rho(z,y) = 1$; \ie the data differ in one element. Then, from standard DP, we have
  $P(B \mid x) \leq e^{\epsilon} P(B \mid z)$ and so obtain
  $P(B \mid x) \leq e^{2 \epsilon} P(B \mid y) = e^{\rho(x, y) \epsilon} P(B \mid y)$. By induction, this holds for any $x,y$ pair.
  Similarly, for $(0, \delta)$-DP, by induction we obtain $P(B \mid x) \leq P(B \mid y) + \delta \rho(x, y)$.
\end{proof}
Definition~\ref{def:dp-old} allows for privacy against a powerful attacker \Adv{}, who attempts to match the empirical distribution induced by the true dataset,
by querying the learned mechanism and comparing its responses to those given by distributions simulated using knowledge of the mechanism
and knowledge of all but one datum---narrowing the dataset down to a Hamming-1 ball. Indeed the requirement of differential privacy is sometimes {\em too strong} since it may come at the price of utility. Definition~\ref{def:dp-new} allows for a much broader encoding of the attacker's knowledge via the selected pseudo-metric. It also allows a more fine-grained notion of privacy. This is quite useful for geographical information systems, as proposed by~\citet{PET:DP}, to which we refer the reader for a broader discussion of the use of metrics in differential privacy.

Finally, we can show that this generalisation of differential privacy satisfies the standard composition property.
\begin{theorem}[Composition]
Let conditional distributions $P(\cdot \mid x)$ on  $(\Params, \field{\Params})$  be \emph{$(\epsilon, \delta)$-differentially private under} a pseudo-metric $\xdistChar: \CS \times \CS \to \Reals_+$ and $P'(\cdot \mid x)$ on $(\Params', \field{\Params'}')$ be \emph{$(\epsilon', \delta')$-differentially private under} the same pseudo-metric. Then the conditional distribution on the product space $(\Params \times \Params', \field{\Params}\otimes \field{\Params'}' )$ given by
$$Q(B \times B' \mid x)= P(B \mid x) P'(B' \mid x), \forall B \times B' \in \field{\Params}\otimes \field{\Params'}'$$
satisfies \emph{$(\epsilon+\epsilon', \delta+\delta')$-differentially private under} the pseudo-metric $\xdistChar$. Here $\field{\Params}\otimes \field{\Params'}' $ is the product $\sigma$-algebra on $\Params \times \Params'$.
\label{thm:composition}
\end{theorem}
\begin{proof} For any $y\in\CS$
\begin{align*}
Q(B \times B' \mid x)&\leq \left[e^{\epsilon \xdist{x}{y}}P(B \mid y)+\delta\xdist{x}{y}\right]P'(B' \mid x)
\\ \nonumber
 &\leq
 e^{\epsilon \xdist{x}{y}}P(B \mid y)\left[e^{\epsilon' \xdist{x}{y}}P'(B' \mid y)+\delta' \xdist{x}{y}\right]+\delta \xdist{x}{y}
 \\ \nonumber
 &\leq e^{(\epsilon+\epsilon') \xdist{x}{y}}P(B \mid y)P'(B' \mid y)+(\delta+\delta')\xdist{x}{y}
\end{align*}
\end{proof}

% Other properties?
% % what is the thing below for?
%   \begin{align}
%     \Pr(\onenorm{\hat{p}_n - p} & \geq  \sqrt{\frac{3}{n} \ln \frac{1}{\delta}}) \leq
%     e^{m \ln 2 - \frac{3}{2} \ln \frac{1}{\delta}}
%     \\ \nonumber
%     &\leq
%     e^{\log_2 \sqrt{\frac{1}{\delta}} \ln 2 - \frac{3}{2} \ln \frac{1}{\delta}}
%     =
%     e^{\frac{1}{2} \ln \frac{1}{\delta} - \frac{3}{2} \ln \frac{1}{\delta}} = \delta.
%   \end{align}
\subsection{Our Main Assumptions}
\label{sec:framework}
In the sequel, we show that if the distribution family $\family$ or prior $\bel$ satisfies certain assumptions, then close datasets $x, y \in \CS$ result in posterior distributions that are close. In that case, it is difficult for a third party to use such a posterior to distinguish the true dataset $x$ from similar datasets.

To formalise these notions, we introduce two possible assumptions one
could make on the smoothness of the family $\family$ with respect to
some metric $d$ on $\Reals_+$. The first assumption states that the
likelihood is smooth for all parameterisations of the family.  First,
we define our notion of smoothness.  Let
$f(x,\param) \defn \ln p_\param(x)$ be the log probability of $x$
under $\param$. The Lipschitz constant for a parameter value $\param$
is:
\begin{equation}
  \label{eq:Lipschitz}
  \ell(\param) \defn \inf \cset{u}{|f(x,\param) - f(y,\param)| \leq u \xdist{x}{y} \forall x, y \in \CS}.
\end{equation}
Our first assumption is uniform smoothness for all parameters.
\begin{assumption}[Lipschitz continuity]
We assume there exists some $L < \infty$ such that:
  \begin{align}
    \ell(\param) \leq L,
    \qquad
    \param \in \Params.
    \label{eq:hoelder-observations}
  \end{align}
  \label{ass:hoelder-observations}
\end{assumption}
In other words, this assumption says that the log probability is Lipschitz with respect to $\xdistChar$ for any parameter value. Consider Example~\ref{ex:finite} for the Bernoulli model. It is easy to see that a model with $\Delta$-sized intervals satisfies the above assumption with $L = \ln 1/\Delta$.

 However, it may be difficult for this assumption to hold uniformly over $\Params$ in general. This can be seen by the following counterexample for the Bernoulli family of distributions: when the parameter is $0$, then any sequence $x = 0, 0, \ldots$ has probability $1$, while any sequence containing a $1$ has probability $0$. The same thing occurs when we take $\Delta \to 0$ in Example~\ref{ex:finite}. To avoid such problems, we relax the assumption by only requiring that \Bay{}'s \emph{prior} probability $\bel$ is concentrated in the regions of the family for which the likelihood is smoothest:
\begin{assumption}[Stochastic Lipschitz continuity;~\citealp{norkin1986stochastic}]
First, define the subset of parameter values
\begin{eqnarray}
  \Params_L   &\defn
\cset{\param \in \Params}{\ell(\param) \leq L}
    \label{eq:hoelder-measure-set}
  \end{eqnarray}
  to be those parameters for which Lipschitz continuity holds with Lipschitz constant $L$.
   Then, there are some constants $c, L_{0} > 0$ such that, for all $L \geq L_{0}$:
  \begin{equation}
    \bel(\Params_L) \geq 1 - \exp(-c(L-L_{0}))\enspace.
    \label{eq:hoelder-measure-observations}
  \end{equation}
  \label{ass:hoelder-measure-observations}
\end{assumption}
By not requiring uniform smoothness, this weaker assumption is easier to meet but still yields useful guarantees. In fact, in Section~\ref{sec:examples}, we demonstrate that this assumption is satisfied by many important example distribution families. However, it will be illustrative to consider the discrete Bernoulli family example at this point.

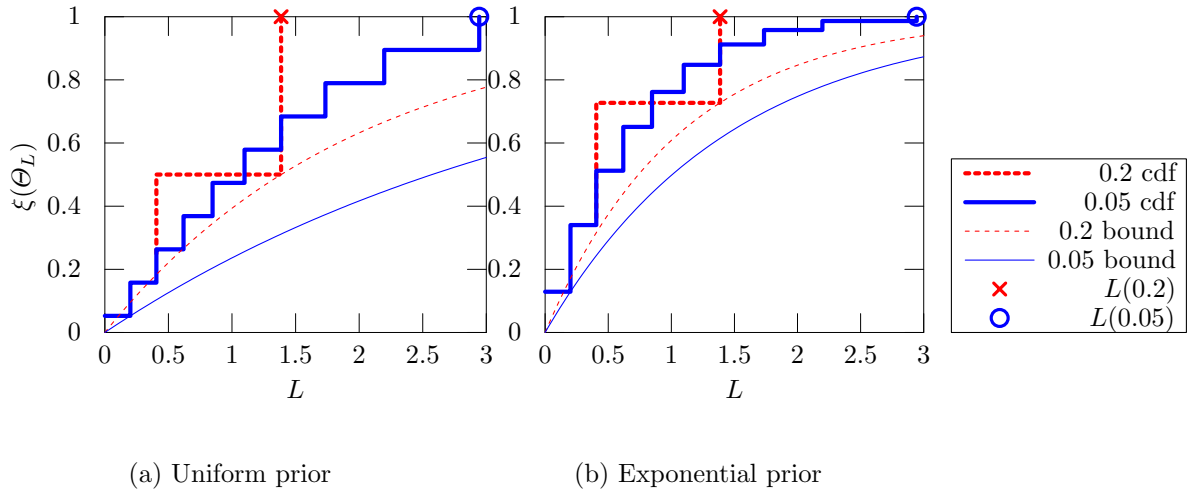
\begin{figure}[t]
%  \centering
  \begin{subfigure}[b]{0.4\textwidth}
    \begin{tikzpicture}[gnuplot]
%% generated with GNUPLOT 4.6p6 (Lua 5.1; terminal rev. 99, script rev. 100)
%% Fri Dec 16 15:52:14 2016
\path (0.000,0.000) rectangle (6.944,5.556);
\gpfill{rgb color={1.000,1.000,1.000}} (1.320,0.985)--(6.390,0.985)--(6.390,5.185)--(1.320,5.185)--cycle;
\gpcolor{color=gp lt color border}
\gpsetlinetype{gp lt border}
\gpsetlinewidth{0.50}
\draw[gp path] (1.320,0.985)--(1.571,0.985);
\draw[gp path] (6.391,0.985)--(6.140,0.985);
\gpcolor{rgb color={0.000,0.000,0.000}}
\node[gp node right,font={\fontsize{10pt}{12pt}\selectfont}] at (1.136,0.985) {0};
\gpcolor{color=gp lt color border}
\draw[gp path] (1.320,1.825)--(1.571,1.825);
\draw[gp path] (6.391,1.825)--(6.140,1.825);
\gpcolor{rgb color={0.000,0.000,0.000}}
\node[gp node right,font={\fontsize{10pt}{12pt}\selectfont}] at (1.136,1.825) {0.2};
\gpcolor{color=gp lt color border}
\draw[gp path] (1.320,2.665)--(1.571,2.665);
\draw[gp path] (6.391,2.665)--(6.140,2.665);
\gpcolor{rgb color={0.000,0.000,0.000}}
\node[gp node right,font={\fontsize{10pt}{12pt}\selectfont}] at (1.136,2.665) {0.4};
\gpcolor{color=gp lt color border}
\draw[gp path] (1.320,3.506)--(1.571,3.506);
\draw[gp path] (6.391,3.506)--(6.140,3.506);
\gpcolor{rgb color={0.000,0.000,0.000}}
\node[gp node right,font={\fontsize{10pt}{12pt}\selectfont}] at (1.136,3.506) {0.6};
\gpcolor{color=gp lt color border}
\draw[gp path] (1.320,4.346)--(1.571,4.346);
\draw[gp path] (6.391,4.346)--(6.140,4.346);
\gpcolor{rgb color={0.000,0.000,0.000}}
\node[gp node right,font={\fontsize{10pt}{12pt}\selectfont}] at (1.136,4.346) {0.8};
\gpcolor{color=gp lt color border}
\draw[gp path] (1.320,5.186)--(1.571,5.186);
\draw[gp path] (6.391,5.186)--(6.140,5.186);
\gpcolor{rgb color={0.000,0.000,0.000}}
\node[gp node right,font={\fontsize{10pt}{12pt}\selectfont}] at (1.136,5.186) {1};
\gpcolor{color=gp lt color border}
\draw[gp path] (1.320,0.985)--(1.320,1.236);
\draw[gp path] (1.320,5.186)--(1.320,4.935);
\gpcolor{rgb color={0.000,0.000,0.000}}
\node[gp node center,font={\fontsize{10pt}{12pt}\selectfont}] at (1.320,0.677) {0};
\gpcolor{color=gp lt color border}
\draw[gp path] (2.165,0.985)--(2.165,1.236);
\draw[gp path] (2.165,5.186)--(2.165,4.935);
\gpcolor{rgb color={0.000,0.000,0.000}}
\node[gp node center,font={\fontsize{10pt}{12pt}\selectfont}] at (2.165,0.677) {0.5};
\gpcolor{color=gp lt color border}
\draw[gp path] (3.010,0.985)--(3.010,1.236);
\draw[gp path] (3.010,5.186)--(3.010,4.935);
\gpcolor{rgb color={0.000,0.000,0.000}}
\node[gp node center,font={\fontsize{10pt}{12pt}\selectfont}] at (3.010,0.677) {1};
\gpcolor{color=gp lt color border}
\draw[gp path] (3.856,0.985)--(3.856,1.236);
\draw[gp path] (3.856,5.186)--(3.856,4.935);
\gpcolor{rgb color={0.000,0.000,0.000}}
\node[gp node center,font={\fontsize{10pt}{12pt}\selectfont}] at (3.856,0.677) {1.5};
\gpcolor{color=gp lt color border}
\draw[gp path] (4.701,0.985)--(4.701,1.236);
\draw[gp path] (4.701,5.186)--(4.701,4.935);
\gpcolor{rgb color={0.000,0.000,0.000}}
\node[gp node center,font={\fontsize{10pt}{12pt}\selectfont}] at (4.701,0.677) {2};
\gpcolor{color=gp lt color border}
\draw[gp path] (5.546,0.985)--(5.546,1.236);
\draw[gp path] (5.546,5.186)--(5.546,4.935);
\gpcolor{rgb color={0.000,0.000,0.000}}
\node[gp node center,font={\fontsize{10pt}{12pt}\selectfont}] at (5.546,0.677) {2.5};
\gpcolor{color=gp lt color border}
\draw[gp path] (6.391,0.985)--(6.391,1.236);
\draw[gp path] (6.391,5.186)--(6.391,4.935);
\gpcolor{rgb color={0.000,0.000,0.000}}
\node[gp node center,font={\fontsize{10pt}{12pt}\selectfont}] at (6.391,0.677) {3};
\gpcolor{color=gp lt color border}
\draw[gp path] (1.320,5.186)--(1.320,0.985)--(6.391,0.985)--(6.391,5.186)--cycle;
\gpcolor{rgb color={0.000,0.000,0.000}}
\node[gp node center,rotate=90,font={\fontsize{10pt}{12pt}\selectfont}] at (0.246,3.085) {$\bel(\Params_L)$};
\node[gp node center,font={\fontsize{10pt}{12pt}\selectfont}] at (3.855,0.215) {$L$};
\gpcolor{rgb color={1.000,0.000,0.000}}
\gpsetlinetype{gp lt plot 2}
\gpsetlinewidth{4.00}
\draw[gp path] (2.005,2.035)--(2.005,3.086)--(3.663,3.086)--(3.663,4.136)--(3.663,5.186);
\gpcolor{rgb color={0.000,0.000,1.000}}
\gpsetlinetype{gp lt plot 0}
\draw[gp path] (1.320,1.206)--(1.659,1.206)--(1.659,1.427)--(1.659,1.648)--(2.005,1.648)%
  --(2.005,1.869)--(2.005,2.091)--(2.366,2.091)--(2.366,2.312)--(2.366,2.533)--(2.752,2.533)%
  --(2.752,2.754)--(2.752,2.975)--(3.177,2.975)--(3.177,3.417)--(3.663,3.417)--(3.663,3.638)%
  --(3.663,3.859)--(4.252,3.859)--(4.252,4.080)--(4.252,4.302)--(5.034,4.302)--(5.034,4.523)%
  --(5.034,4.744)--(6.297,4.744)--(6.297,4.965)--(6.297,5.186);
\gpcolor{rgb color={1.000,0.000,0.000}}
\gpsetlinetype{gp lt plot 2}
\gpsetlinewidth{1.00}
\draw[gp path] (1.320,0.985)--(1.371,1.048)--(1.421,1.109)--(1.472,1.170)--(1.523,1.230)%
  --(1.574,1.289)--(1.624,1.347)--(1.675,1.404)--(1.726,1.460)--(1.776,1.516)--(1.827,1.570)%
  --(1.878,1.624)--(1.929,1.677)--(1.979,1.729)--(2.030,1.781)--(2.081,1.831)--(2.131,1.881)%
  --(2.182,1.931)--(2.233,1.979)--(2.283,2.027)--(2.334,2.074)--(2.385,2.120)--(2.436,2.166)%
  --(2.486,2.211)--(2.537,2.255)--(2.588,2.299)--(2.638,2.342)--(2.689,2.384)--(2.740,2.426)%
  --(2.791,2.467)--(2.841,2.507)--(2.892,2.547)--(2.943,2.586)--(2.993,2.625)--(3.044,2.663)%
  --(3.095,2.701)--(3.146,2.738)--(3.196,2.774)--(3.247,2.810)--(3.298,2.846)--(3.348,2.880)%
  --(3.399,2.915)--(3.450,2.949)--(3.501,2.982)--(3.551,3.015)--(3.602,3.047)--(3.653,3.079)%
  --(3.703,3.110)--(3.754,3.141)--(3.805,3.172)--(3.856,3.202)--(3.906,3.231)--(3.957,3.260)%
  --(4.008,3.289)--(4.058,3.317)--(4.109,3.345)--(4.160,3.372)--(4.210,3.399)--(4.261,3.426)%
  --(4.312,3.452)--(4.363,3.478)--(4.413,3.503)--(4.464,3.528)--(4.515,3.553)--(4.565,3.577)%
  --(4.616,3.601)--(4.667,3.625)--(4.718,3.648)--(4.768,3.671)--(4.819,3.694)--(4.870,3.716)%
  --(4.920,3.738)--(4.971,3.759)--(5.022,3.781)--(5.073,3.802)--(5.123,3.822)--(5.174,3.842)%
  --(5.225,3.862)--(5.275,3.882)--(5.326,3.902)--(5.377,3.921)--(5.428,3.940)--(5.478,3.958)%
  --(5.529,3.976)--(5.580,3.994)--(5.630,4.012)--(5.681,4.030)--(5.732,4.047)--(5.782,4.064)%
  --(5.833,4.080)--(5.884,4.097)--(5.935,4.113)--(5.985,4.129)--(6.036,4.145)--(6.087,4.160)%
  --(6.137,4.176)--(6.188,4.191)--(6.239,4.205)--(6.290,4.220)--(6.340,4.234)--(6.391,4.249);
\gpcolor{rgb color={0.000,0.000,1.000}}
\gpsetlinetype{gp lt plot 0}
\draw[gp path] (1.320,0.985)--(1.371,1.019)--(1.421,1.052)--(1.472,1.086)--(1.523,1.119)%
  --(1.574,1.151)--(1.624,1.184)--(1.675,1.216)--(1.726,1.248)--(1.776,1.280)--(1.827,1.311)%
  --(1.878,1.342)--(1.929,1.373)--(1.979,1.404)--(2.030,1.434)--(2.081,1.465)--(2.131,1.495)%
  --(2.182,1.524)--(2.233,1.554)--(2.283,1.583)--(2.334,1.612)--(2.385,1.641)--(2.436,1.669)%
  --(2.486,1.698)--(2.537,1.726)--(2.588,1.754)--(2.638,1.781)--(2.689,1.809)--(2.740,1.836)%
  --(2.791,1.863)--(2.841,1.890)--(2.892,1.916)--(2.943,1.942)--(2.993,1.969)--(3.044,1.994)%
  --(3.095,2.020)--(3.146,2.046)--(3.196,2.071)--(3.247,2.096)--(3.298,2.121)--(3.348,2.146)%
  --(3.399,2.170)--(3.450,2.194)--(3.501,2.218)--(3.551,2.242)--(3.602,2.266)--(3.653,2.289)%
  --(3.703,2.313)--(3.754,2.336)--(3.805,2.359)--(3.856,2.382)--(3.906,2.404)--(3.957,2.427)%
  --(4.008,2.449)--(4.058,2.471)--(4.109,2.493)--(4.160,2.514)--(4.210,2.536)--(4.261,2.557)%
  --(4.312,2.578)--(4.363,2.599)--(4.413,2.620)--(4.464,2.641)--(4.515,2.661)--(4.565,2.682)%
  --(4.616,2.702)--(4.667,2.722)--(4.718,2.742)--(4.768,2.761)--(4.819,2.781)--(4.870,2.800)%
  --(4.920,2.819)--(4.971,2.839)--(5.022,2.857)--(5.073,2.876)--(5.123,2.895)--(5.174,2.913)%
  --(5.225,2.931)--(5.275,2.950)--(5.326,2.968)--(5.377,2.986)--(5.428,3.003)--(5.478,3.021)%
  --(5.529,3.038)--(5.580,3.056)--(5.630,3.073)--(5.681,3.090)--(5.732,3.107)--(5.782,3.123)%
  --(5.833,3.140)--(5.884,3.156)--(5.935,3.173)--(5.985,3.189)--(6.036,3.205)--(6.087,3.221)%
  --(6.137,3.237)--(6.188,3.252)--(6.239,3.268)--(6.290,3.283)--(6.340,3.299)--(6.391,3.314);
\gpcolor{rgb color={1.000,0.000,0.000}}
\gpsetlinewidth{3.00}
\gpsetpointsize{8.00}
\gppoint{gp mark 2}{(3.663,5.186)}
\gpcolor{rgb color={0.000,0.000,1.000}}
\gppoint{gp mark 6}{(6.297,5.186)}
%% coordinates of the plot area
\gpdefrectangularnode{gp plot 1}{\pgfpoint{1.320cm}{0.985cm}}{\pgfpoint{6.391cm}{5.186cm}}
\end{tikzpicture}
%% gnuplot variables
    \caption{Uniform prior}
    \label{fig:finite-uniform}
  \end{subfigure}
  \begin{subfigure}[b]{0.4\textwidth}
    \begin{tikzpicture}[gnuplot]
%% generated with GNUPLOT 4.6p6 (Lua 5.1; terminal rev. 99, script rev. 100)
%% Fri Dec 16 15:55:51 2016
\path (0.000,0.000) rectangle (9.722,5.556);
\gpfill{rgb color={1.000,1.000,1.000}} (1.012,0.985)--(6.044,0.985)--(6.044,5.185)--(1.012,5.185)--cycle;
\gpcolor{color=gp lt color border}
\gpsetlinetype{gp lt border}
\gpsetlinewidth{0.50}
\draw[gp path] (1.012,0.985)--(1.263,0.985);
\draw[gp path] (6.045,0.985)--(5.794,0.985);
\gpcolor{rgb color={0.000,0.000,0.000}}
\node[gp node right,font={\fontsize{10pt}{12pt}\selectfont}] at (0.828,0.985) {0};
\gpcolor{color=gp lt color border}
\draw[gp path] (1.012,1.825)--(1.263,1.825);
\draw[gp path] (6.045,1.825)--(5.794,1.825);
\gpcolor{rgb color={0.000,0.000,0.000}}
\node[gp node right,font={\fontsize{10pt}{12pt}\selectfont}] at (0.828,1.825) {0.2};
\gpcolor{color=gp lt color border}
\draw[gp path] (1.012,2.665)--(1.263,2.665);
\draw[gp path] (6.045,2.665)--(5.794,2.665);
\gpcolor{rgb color={0.000,0.000,0.000}}
\node[gp node right,font={\fontsize{10pt}{12pt}\selectfont}] at (0.828,2.665) {0.4};
\gpcolor{color=gp lt color border}
\draw[gp path] (1.012,3.506)--(1.263,3.506);
\draw[gp path] (6.045,3.506)--(5.794,3.506);
\gpcolor{rgb color={0.000,0.000,0.000}}
\node[gp node right,font={\fontsize{10pt}{12pt}\selectfont}] at (0.828,3.506) {0.6};
\gpcolor{color=gp lt color border}
\draw[gp path] (1.012,4.346)--(1.263,4.346);
\draw[gp path] (6.045,4.346)--(5.794,4.346);
\gpcolor{rgb color={0.000,0.000,0.000}}
\node[gp node right,font={\fontsize{10pt}{12pt}\selectfont}] at (0.828,4.346) {0.8};
\gpcolor{color=gp lt color border}
\draw[gp path] (1.012,5.186)--(1.263,5.186);
\draw[gp path] (6.045,5.186)--(5.794,5.186);
\gpcolor{rgb color={0.000,0.000,0.000}}
\node[gp node right,font={\fontsize{10pt}{12pt}\selectfont}] at (0.828,5.186) {1};
\gpcolor{color=gp lt color border}
\draw[gp path] (1.012,0.985)--(1.012,1.236);
\draw[gp path] (1.012,5.186)--(1.012,4.935);
\gpcolor{rgb color={0.000,0.000,0.000}}
\node[gp node center,font={\fontsize{10pt}{12pt}\selectfont}] at (1.012,0.677) {0};
\gpcolor{color=gp lt color border}
\draw[gp path] (1.851,0.985)--(1.851,1.236);
\draw[gp path] (1.851,5.186)--(1.851,4.935);
\gpcolor{rgb color={0.000,0.000,0.000}}
\node[gp node center,font={\fontsize{10pt}{12pt}\selectfont}] at (1.851,0.677) {0.5};
\gpcolor{color=gp lt color border}
\draw[gp path] (2.690,0.985)--(2.690,1.236);
\draw[gp path] (2.690,5.186)--(2.690,4.935);
\gpcolor{rgb color={0.000,0.000,0.000}}
\node[gp node center,font={\fontsize{10pt}{12pt}\selectfont}] at (2.690,0.677) {1};
\gpcolor{color=gp lt color border}
\draw[gp path] (3.529,0.985)--(3.529,1.236);
\draw[gp path] (3.529,5.186)--(3.529,4.935);
\gpcolor{rgb color={0.000,0.000,0.000}}
\node[gp node center,font={\fontsize{10pt}{12pt}\selectfont}] at (3.529,0.677) {1.5};
\gpcolor{color=gp lt color border}
\draw[gp path] (4.367,0.985)--(4.367,1.236);
\draw[gp path] (4.367,5.186)--(4.367,4.935);
\gpcolor{rgb color={0.000,0.000,0.000}}
\node[gp node center,font={\fontsize{10pt}{12pt}\selectfont}] at (4.367,0.677) {2};
\gpcolor{color=gp lt color border}
\draw[gp path] (5.206,0.985)--(5.206,1.236);
\draw[gp path] (5.206,5.186)--(5.206,4.935);
\gpcolor{rgb color={0.000,0.000,0.000}}
\node[gp node center,font={\fontsize{10pt}{12pt}\selectfont}] at (5.206,0.677) {2.5};
\gpcolor{color=gp lt color border}
\draw[gp path] (6.045,0.985)--(6.045,1.236);
\draw[gp path] (6.045,5.186)--(6.045,4.935);
\gpcolor{rgb color={0.000,0.000,0.000}}
\node[gp node center,font={\fontsize{10pt}{12pt}\selectfont}] at (6.045,0.677) {3};
\gpcolor{color=gp lt color border}
\draw[gp path] (1.012,5.186)--(1.012,0.985)--(6.045,0.985)--(6.045,5.186)--cycle;
\gpcolor{rgb color={0.000,0.000,0.000}}
\node[gp node center,font={\fontsize{10pt}{12pt}\selectfont}] at (3.528,0.215) {$L$};
\gpcolor{color=gp lt color border}
\gpsetlinewidth{1.00}
\draw[gp path] (6.413,0.985)--(6.413,3.295)--(9.537,3.295)--(9.537,0.985)--cycle;
\gpcolor{rgb color={0.000,0.000,0.000}}
\node[gp node right,font={\fontsize{10pt}{12pt}\selectfont}] at (9.537,3.103) {0.2 cdf};
\gpcolor{rgb color={1.000,0.000,0.000}}
\gpsetlinetype{gp lt plot 2}
\gpsetlinewidth{4.00}
\draw[gp path] (6.597,3.103)--(7.513,3.103);
\draw[gp path] (1.692,2.513)--(1.692,4.040)--(3.338,4.040)--(3.338,4.613)--(3.338,5.186);
\gpcolor{rgb color={0.000,0.000,0.000}}
\node[gp node right,font={\fontsize{10pt}{12pt}\selectfont}] at (9.537,2.718) {0.05 cdf};
\gpcolor{rgb color={0.000,0.000,1.000}}
\gpsetlinetype{gp lt plot 0}
\draw[gp path] (6.597,2.718)--(7.513,2.718);
\draw[gp path] (1.012,1.527)--(1.349,1.527)--(1.349,1.970)--(1.349,2.414)--(1.692,2.414)%
  --(1.692,2.775)--(1.692,3.137)--(2.051,3.137)--(2.051,3.428)--(2.051,3.720)--(2.433,3.720)%
  --(2.433,3.953)--(2.433,4.185)--(2.855,4.185)--(2.855,4.546)--(3.338,4.546)--(3.338,4.682)%
  --(3.338,4.817)--(3.922,4.817)--(3.922,4.913)--(3.922,5.009)--(4.698,5.009)--(4.698,5.069)%
  --(4.698,5.129)--(5.952,5.129)--(5.952,5.157)--(5.952,5.186);
\gpcolor{rgb color={0.000,0.000,0.000}}
\node[gp node right,font={\fontsize{10pt}{12pt}\selectfont}] at (9.537,2.333) {0.2 bound};
\gpcolor{rgb color={1.000,0.000,0.000}}
\gpsetlinetype{gp lt plot 2}
\gpsetlinewidth{1.00}
\draw[gp path] (6.597,2.333)--(7.513,2.333);
\draw[gp path] (1.012,0.985)--(1.062,1.101)--(1.113,1.215)--(1.163,1.325)--(1.213,1.432)%
  --(1.264,1.536)--(1.314,1.637)--(1.364,1.736)--(1.415,1.831)--(1.465,1.924)--(1.515,2.015)%
  --(1.566,2.103)--(1.616,2.188)--(1.666,2.271)--(1.717,2.352)--(1.767,2.431)--(1.817,2.507)%
  --(1.868,2.581)--(1.918,2.653)--(1.968,2.724)--(2.019,2.792)--(2.069,2.858)--(2.119,2.923)%
  --(2.170,2.986)--(2.220,3.047)--(2.270,3.106)--(2.321,3.164)--(2.371,3.220)--(2.421,3.274)%
  --(2.472,3.327)--(2.522,3.379)--(2.572,3.429)--(2.623,3.478)--(2.673,3.525)--(2.723,3.571)%
  --(2.774,3.616)--(2.824,3.659)--(2.874,3.702)--(2.925,3.743)--(2.975,3.783)--(3.025,3.822)%
  --(3.076,3.860)--(3.126,3.896)--(3.176,3.932)--(3.227,3.967)--(3.277,4.001)--(3.327,4.033)%
  --(3.378,4.065)--(3.428,4.097)--(3.478,4.127)--(3.529,4.156)--(3.579,4.185)--(3.629,4.212)%
  --(3.679,4.239)--(3.730,4.266)--(3.780,4.291)--(3.830,4.316)--(3.881,4.340)--(3.931,4.364)%
  --(3.981,4.386)--(4.032,4.409)--(4.082,4.430)--(4.132,4.451)--(4.183,4.471)--(4.233,4.491)%
  --(4.283,4.510)--(4.334,4.529)--(4.384,4.547)--(4.434,4.565)--(4.485,4.582)--(4.535,4.599)%
  --(4.585,4.615)--(4.636,4.631)--(4.686,4.647)--(4.736,4.662)--(4.787,4.676)--(4.837,4.690)%
  --(4.887,4.704)--(4.938,4.717)--(4.988,4.730)--(5.038,4.743)--(5.089,4.755)--(5.139,4.767)%
  --(5.189,4.779)--(5.240,4.790)--(5.290,4.801)--(5.340,4.812)--(5.391,4.822)--(5.441,4.832)%
  --(5.491,4.842)--(5.542,4.852)--(5.592,4.861)--(5.642,4.870)--(5.693,4.879)--(5.743,4.887)%
  --(5.793,4.895)--(5.844,4.903)--(5.894,4.911)--(5.944,4.919)--(5.995,4.926)--(6.045,4.934);
\gpcolor{rgb color={0.000,0.000,0.000}}
\node[gp node right,font={\fontsize{10pt}{12pt}\selectfont}] at (9.537,1.948) {0.05 bound};
\gpcolor{rgb color={0.000,0.000,1.000}}
\gpsetlinetype{gp lt plot 0}
\draw[gp path] (6.597,1.948)--(7.513,1.948);
\draw[gp path] (1.012,0.985)--(1.062,1.071)--(1.113,1.155)--(1.163,1.237)--(1.213,1.318)%
  --(1.264,1.397)--(1.314,1.475)--(1.364,1.550)--(1.415,1.625)--(1.465,1.698)--(1.515,1.769)%
  --(1.566,1.839)--(1.616,1.907)--(1.666,1.974)--(1.717,2.040)--(1.767,2.104)--(1.817,2.167)%
  --(1.868,2.229)--(1.918,2.289)--(1.968,2.348)--(2.019,2.406)--(2.069,2.463)--(2.119,2.519)%
  --(2.170,2.573)--(2.220,2.627)--(2.270,2.679)--(2.321,2.730)--(2.371,2.781)--(2.421,2.830)%
  --(2.472,2.878)--(2.522,2.925)--(2.572,2.971)--(2.623,3.017)--(2.673,3.061)--(2.723,3.104)%
  --(2.774,3.147)--(2.824,3.189)--(2.874,3.229)--(2.925,3.269)--(2.975,3.309)--(3.025,3.347)%
  --(3.076,3.384)--(3.126,3.421)--(3.176,3.457)--(3.227,3.493)--(3.277,3.527)--(3.327,3.561)%
  --(3.378,3.594)--(3.428,3.627)--(3.478,3.659)--(3.529,3.690)--(3.579,3.721)--(3.629,3.751)%
  --(3.679,3.780)--(3.730,3.809)--(3.780,3.837)--(3.830,3.864)--(3.881,3.891)--(3.931,3.918)%
  --(3.981,3.944)--(4.032,3.969)--(4.082,3.994)--(4.132,4.018)--(4.183,4.042)--(4.233,4.066)%
  --(4.283,4.089)--(4.334,4.111)--(4.384,4.133)--(4.434,4.154)--(4.485,4.176)--(4.535,4.196)%
  --(4.585,4.216)--(4.636,4.236)--(4.686,4.256)--(4.736,4.275)--(4.787,4.293)--(4.837,4.312)%
  --(4.887,4.329)--(4.938,4.347)--(4.988,4.364)--(5.038,4.381)--(5.089,4.397)--(5.139,4.413)%
  --(5.189,4.429)--(5.240,4.445)--(5.290,4.460)--(5.340,4.475)--(5.391,4.489)--(5.441,4.503)%
  --(5.491,4.517)--(5.542,4.531)--(5.592,4.544)--(5.642,4.558)--(5.693,4.570)--(5.743,4.583)%
  --(5.793,4.595)--(5.844,4.607)--(5.894,4.619)--(5.944,4.631)--(5.995,4.642)--(6.045,4.653);
\gpcolor{rgb color={0.000,0.000,0.000}}
\node[gp node right,font={\fontsize{10pt}{12pt}\selectfont}] at (9.537,1.563) {$L(0.2)$};
\gpcolor{rgb color={1.000,0.000,0.000}}
\gpsetlinewidth{3.00}
\gpsetpointsize{8.00}
\gppoint{gp mark 2}{(3.338,5.186)}
\gppoint{gp mark 2}{(7.055,1.563)}
\gpcolor{rgb color={0.000,0.000,0.000}}
\node[gp node right,font={\fontsize{10pt}{12pt}\selectfont}] at (9.537,1.178) {$L(0.05)$};
\gpcolor{rgb color={0.000,0.000,1.000}}
\gppoint{gp mark 6}{(5.952,5.186)}
\gppoint{gp mark 6}{(7.055,1.178)}
%% coordinates of the plot area
\gpdefrectangularnode{gp plot 1}{\pgfpoint{1.012cm}{0.985cm}}{\pgfpoint{6.045cm}{5.186cm}}
\end{tikzpicture}
%% gnuplot variables
    \caption{Exponential prior}
    \label{fig:finite-exponential}
  \end{subfigure}
  \caption{The  mass of $L$-Lipschitz parameters, for two finite familes of Bernoulli distributions with $\Delta \in \set{0.2, 0.05}$ (thick lines) together with their respective stochastic Lipschitz bounds (thin lines) and the corresponding uniform Lipschitz constant $L$.}
  \label{fig:finite}
\end{figure}
\begin{example}[Continuation of Example~\ref{ex:finite}]
These conditions can be examined in terms of the finite family of Example~\ref{ex:finite}. %, which have a Lipschitz constant smaller than $L$.
Figure~\ref{fig:finite} demonstrates the assumptions for $\Delta = 0.2$ (red dashed lines) and $\Delta = 0.05$ (blue solid lines).

In particular, the two thick lines Figure~\ref{fig:finite-uniform} show the probability mass of $L$-Lipschitz parameters for the two families. They are both step functions, as the families are discrete.\footnote{The $\Delta = 0.2$ family only has two steps, as the Lipschitz constant is symmetric about $\param = 0.5$.} The $\times$ and $\circ$ symbols show the corresponding Lipschitz constants for the two families respectively, and we can clearly see $\Delta = 0.2$ has about half the Lipschitz constant of $\Delta = 0.05$. The thinner curves depict the highest lower bound on the probability mass defined in Assumption~\ref{ass:hoelder-measure-observations}. There we see that the higher curve is achieved by $\Delta = 0.2$.

In order to improve the lower bound, we need to modify our prior
distribution on the family members so as to place less mass on the
more sensitive parameters. The result of this operation is shown in
Figure~\ref{fig:finite-exponential}, which uses the prior
$\bel(\param) \propto \exp(-\ell(\param))$, i.e., it places
exponentially smaller weight in more sensitive parameters. This results in both lower bounds being shifted upwards, corresponding to a higher $c$ constant in Assumption~\ref{ass:hoelder-measure-observations}. Of course, this has no effect on Assumption~\ref{ass:hoelder-observations}.
\end{example}

For completeness, we now show that verifying our assumptions for a distribution of a single random variable lifts to a corresponding property for the product distribution on i.i.d. samples.
\begin{lemma}
	If $\family$ satisfies Assumption~\ref{ass:hoelder-observations} (resp. Assumption~\ref{ass:hoelder-measure-observations}) with respect to pseudo-metric $\xdistChar$ and constant $L$ (or $c$), then, for any fixed $n \in \Naturals$, the product family $\family^n$ with densities (sim. measures) $p_{\Params}^n(\set{x_i}) = \prod_{i=1}^{n}{p_{\Params}(x_i)}$ satisfies the same assumption with respect to:
\[
  \xdistChar^n( \set{x_i}, \set{y_i}) = \tsum_{i=1}^{n}{\xdist{x_i}{y_i}}
\]
and constant $L$ (or $c$). %Further, if $\set{x_i}$ and $\set{y_i}$ differ in at most $k$ items, the assumption holds with the same pseudo-metric but with constant $L \cdot k$ (or $\frac{c}{k}$) instead.
\label{lma:prodDist}
\end{lemma}

\subsubsection{Necessary Conditions}
\label{sec:necessary-conditions}
Finally, let us discuss whether the above conditions are necessary. In fact, either the first condition must be true, or a similar condition must hold on the marginals for every possible dataset pair $(x,y)$. Our second condition can be seen as a specific case of the necessary condition for the marginals, as explained below.
\begin{theorem}
  For a prior $\bel$ to be differentially private for a family $\family$, either
  \begin{align}
    \label{eq:necessary-condition}
    \sup_{\param \in \Params} \ln \frac{P_\param(x)}{P_\param(y)}& \leq L \xdist{x}{y},
                                                                   & \textrm{or} & &
    \ln \frac{\marg(y)}{\marg(x)} & \leq L \xdist{x}{y}
  \end{align}
  for all $x, y \in \CX$.
\end{theorem}
\begin{proof}
  If neither condition holds for some pair $(x,y)$ then there is $\param$ such that
$\ln \frac{P_\param(x)}{P_\param(y)} > L \xdist{x}{y}$ and $\ln \frac{\marg(y)}{\marg(x)} > L \xdist{x}{y}$.
Simply adding the two, we obtain $\ln \frac{\bel(\param \mid x)}{\bel(\param \mid y)} > 2 L \xdist{x}{y}$, and so the resulting posterior is not $L$-differentially private.
\end{proof}
In our main results, we show that the first part of the conditions, which is equivalent to our first assumption, is also sufficient. However, the second part is too weak to imply differential privacy on its own.

\subsubsection{The Choice of Metric and Sufficient Statistics}
The extent to which our assumptions hold for a particular family of distributions $\family$ depends mainly on $\xdistChar$. The choice of metric is also important for achieving differential privacy with respect to it. Let us specifically consider metrics defined in terms of a difference in statistics:
\begin{equation*}
  %\label{eq:statistics}
  \xdist{x}{y} \defn \norm{\stat(x) - \stat(y)}\enspace,
\end{equation*}
where $\stat: \CS \to \CV$ is a statistic mapping from datasets to a normed vector space.

\paragraph{Necessity for assumptions.} In that case, our assumptions imply that $\stat$ must be a \emph{sufficient} statistic, since if $\stat(x) = \stat(y)$ then $\xdist{x}{y} = 0$ and it follows that $P_\param(x) = P_\param(y)$. More generally, $\xdistChar$ must be such that if the distance between $x,y$ is zero, then their probabilities should be equal.
 We will see some examples of such statistics for conjugate distributions in the exponential family in Section~\ref{sec:examples}.
That means that we cannot use a metric which simply ignores part of the data, for example.

\paragraph{Necessity for differential privacy.} Similarly, the very definition of differential privacy (Definition~\ref{def:dp-new}) implies that $\stat$ must be a \emph{Bayes-sufficient} statistic. That means that for any $x, y$, it holds
\[
\stat(x) = \stat(y) \quad \Rightarrow \quad \bel(B \mid x)  = \bel(B \mid y)\; ,\;\;\; \forall B\in\field{\Params}\;.
\]
Note that this is a slightly weaker condition than a sufficient statistic, which is necessary for our assumptions to hold.

\if 0
\subsection{Insufficient Statistics}
Thinking about insufficient statistics brings us to an alternative notion of differential privacy.
\begin{definition}[$(\epsilon, \chi)$-Insufficient Statistic Differential Privacy ($\epsilon, \chi)$-IS-DP)]
  \begin{equation}
    \label{eq:is-dp}
    \ln \abs{\frac{P(B \mid x)}{P(B \mid y)}}
    \leq
    \epsilon \xdist{x}{y} + \chi,
  \end{equation}
  where $\epsilon, \chi > 0$ are constants.
\end{definition}
Note that the standard definition of differential privacy, by avoiding discussing the metric space induced by dataset neighbourhoods ignores this factor. It is of course true that any discussion of insufficient statistics must be
\fi

\subsection{Summary of Results}

Given the above assumptions, we show: firstly, that if we choose an informative prior $\bel$, the resulting posterior is robust in terms of KL-divergence to small changes in the data. Secondly, that the posterior distribution is differentially private. Thirdly, that this implies that sampling from the posterior can be used as part of a differentially-private mechanism. We complement these with results on how easily an adversary can distinguish two similar datasets from posterior samples. Finally, we characterise the trade-off between utility and privacy, stated here informally for ease of exposition:
\begin{claim}
If \Adv{} prefers to use the prior $\bel^\star$,
but \Bay{} uses a prior $\bel$ satisfying Assumption~\ref{ass:hoelder-observations}, and \Adv{}'s utility is bounded in $[0,1]$, the following is true for the posterior sampling mechanism with $\nsamples$ samples:
\begin{itemize}
\item The mechanism is $2 \nsamples L$-differentially private.
\item \Adv{}'s utility loss is  $O\left([1 - \bel^\star(\Params_L)] + \sqrt{1/N}\right)$ w.h.p., where $\Params_L$ is the support of $\bel$.
\end{itemize}
\end{claim}

The following sections discuss our main results in detail. We begin by proving that our assumptions result in robust posteriors, in the sense that the KL divergence between posteriors arising from similar datasets is small. Then we show that they also result in differentially private posterior distributions, and analyse the resulting posterior sampling mechanism. We conclude with some examples and a discussion of related work.

\section{Robustness of the Posterior Distribution}
\label{sec:robustness}
We now show that the above assumptions provide guarantees on the robustness of the posterior. That is, if the distance between two datasets $x, y$ is small, then so too is the distance between the two resulting posteriors, $\bel(\cdot \mid x)$ and $\bel(\cdot \mid y)$. We prove this result for the case where we measure the distance between the posteriors in terms of the well-known KL-divergence:
\begin{equation*}
  %\label{eq:kl-divergence}
  \dist{P}{Q} = \int_S \ln \frac{\dd{P}}{\dd{Q}} \dd{P} \enspace.
\end{equation*}
The following theorem shows that any distribution family $\family$ and prior $\bel$ satisfying one of our assumptions is robust, in the sense that the posterior does not change significantly with small changes to the dataset. It is notable that our mechanisms are simply tuned through the choice of prior.
\begin{theorem}
When $\bel$ is 
a prior distribution on $\Params$ and $\bel(\cdot \mid x)$ and $\bel(\cdot \mid y)$ are the respective posterior distributions for datasets $x, y \in \CS$, the following results hold:
  \begin{enumerate}
  \item Under a pseudo-metric $\rho$ and $L >  0$ satisfying Assumption~\ref{ass:hoelder-observations},
    \begin{equation}
      \dist{\bel(\cdot \mid x)}{\bel(\cdot \mid y)}
      \leq
      2L\xdist{x}{y}\enspace.
      \label{eq:kl-1}
    \end{equation}
    \label{the:kl-1}
  \item Under a pseudo-metric $\rho$ and $c > 1$ satisfying Assumption~\ref{ass:hoelder-measure-observations}
    \begin{equation}
      \dist{\bel(\cdot \mid x)}{\bel(\cdot \mid y)}
       \leq
      \constFamily
      \left(
        1 + 2 L_0 + c^{-1}
      \right)
      \xdist{x}{y}\enspace,
      \label{eq:kl-2}
    \end{equation}
    where $\constFamily$ is the ratio between the maximum and marginal likelihoods \eqref{eq:constFamily}.
    \label{the:kl-2}
  \end{enumerate}
  \label{the:kl}
\end{theorem}
%\begin{proof}See Appendix~\ref{sec:proofs}.\end{proof}

Note that the second claim bounds the KL divergence in terms of \Bay{}'s prior belief that $L$ is small, which is expressed via the constant $c$. The larger $c$ is, the less prior mass is placed in large $L$ and so the more robust inference becomes. Of course, choosing $c$ to be too large may decrease efficiency.

It is important to also discuss the constant $\constFamily$. To get a better intuition, consider the case where $\Params, \CX$ are finite. Let $\mlparam(x)$ be the maximum-likelihood estimate for $x$. Then we have that:
\begin{eqnarray}
\constFamily
&=&
\max_x \frac{P_{\mlparam(x)}(x)}
{\sum_\Params P_{\param}(x) \bel(\param)}
\ \leq\ \max_x \frac{1}{\bel(\mlparam(x))}\enspace,\label{eq:constFamily}
\end{eqnarray}
there is therefore a natural dependency on the prior mass placed on
maximum-likelihood estimators.
% CD: Maybe we need to add something about efficiency here
\section{Privacy and Utility}
\label{sec:privacy}
%In the following section, we demonstrate that if the likelihood and
%prior distributions satisfy the conditions set forth by Assumption~\ref{ass:hoelder-observations} or~\ref{ass:hoelder-measure-observations}, then the resulting posterior has desirable privacy properties, both with respect to differential privacy and with respect to distinguishability.

We next examine the differential privacy of the posterior distribution. We show in Section~\ref{sec:diff-priv-post} that this can be achieved under either of our assumptions. The result can also be interpreted as the differential privacy of a \emph{posterior sampling mechanism} for responding to queries  (described in Section~\ref{sec:post-sampl-query}), for which we prove a bound on the utility depending on the number of samples taken. Section~\ref{sec:bound-post-query} examines an alternative notion of privacy, {\em dataset distinguishability}, similar to \citet{wasserman2010statistical}. For this, we prove a bound on privacy, that also depends on the number of samples taken. Together, these exhibit a trade off between utility and privacy controlled by choosing the number of samples appropriately, in a manner described in Section~\ref{sec:trad-util-priv}.

\subsection{Differential Privacy of Posterior Distributions}
\label{sec:diff-priv-post}

We consider our generalised notion of differential privacy
for posterior distributions (Definition~\ref{def:dp-new}); and show that the type of differential privacy exhibited by the posterior depends on which assumption holds.
\begin{theorem}
  \begin{enumerate}
  \item Under a pseudo-metric $\rho$ and $L>0$ satisfying Assumption~\ref{ass:hoelder-observations}, for all $x,y \in \mathcal{\CS}$,
    $B \in \field{\Params}$:
    \begin{align*}
      \bel(B \mid x) & \leq \exp\{ 2L \xdist{x}{y} \} \bel(B \mid y)\; .
    \end{align*}
    \ie the posterior $\bel$ is
    $(2L, 0)$-differentially private under pseudo-metric $\xdistChar$.
    \label{thm:dp1}
  \item Under a pseudo-metric $\rho$ and $c > 1$ satisfying Assumption~\ref{ass:hoelder-measure-observations}, $\constFamily$ defined in~\eqref{eq:constFamily}, for
    all $x,y \in \mathcal{\CS}$, $B \in \field{\Params}$:
    \begin{align*}
      \abs{\bel(B \mid x) - \bel(B \mid y)}
      & \leq \sqrt{\frac{\constFamily}{2}\left(1 + 2 L_0 +  c^{-1} \right) \xdist{x}{y}},
    \end{align*}
    \ie the posterior $\bel$ is
    $\left(0,  O(\sqrt{\constFamily(L_0 + 1/c)})\right)$-differentially private\footnote{This holds, for example, for hamming distance as in the Beta-Binomial example presented in Lemma~\ref{lem:beta}.} under pseudo-metric $\sqrt{\xdistChar}$.
    \label{thm:dp2}
  \end{enumerate}
  \label{thm:dp}
\end{theorem}
The difference between the two bounds' form is due to the fact that while the first claim has a direct proof, the second claim arises from the KL divergence bound in Theorem~\ref{the:kl}.

Finally, we show that posterior distributions are also randomly differentially private.
\begin{corollary}Under pseudo-metric $\rho$, $c>1$ and $L\geq L_0>0$ satisfying Assumption~\ref{ass:hoelder-measure-observations}:
$$\Pr\left[\forall B \in \field{\Params}: \bel(B \mid x) \leq \exp\left\{ 2L \xdist{x}{y} \right\} \bel\left(B \mid y\right), \forall x, y\in \mathcal{\CS}\right] \geq 1-\exp(-c(L-L_{0}))\; .$$
\ie the posterior $\bel$ is $(2L, 0, \exp(-c(L-L_{0})))$-randomly differentially private~\citep{rdp} under pseudo-metric $\xdistChar$.
\end{corollary}
This is a conceptually different definition from the original RDP, as the measure over which the randomness is defined is not the data distribution, but the prior measure $\bel$.

This property of the posterior distribution directly leads to the definition of a posterior sampling mechanism which will be differentially private. This is explained in the following section.

\subsection{Posterior Sampling Mechanism}
\label{sec:post-sampl-query}

Given that we have a full posterior distribution which is differentially private, we can use it to define a private mechanism. We may allow the adversary to submit an arbitrary set of queries $\set{\query_t}$ with each $\query_t\in \Queries$. Each query warrants a response $\response_t$ in a set of possible responses $\Responses$. The adversary is allowed to condition the queries on our previous responses.

We extend our original approach~\citep{alt:robust} to take some
utility function $u$ into account, which scores preferences of
responses given a query. The algorithm requires a prior $\bel$ to be
defined on a family $\family$ of probability distributions, whose
members do not necessarily generate i.i.d. observations. They could be
Markov chains for example.  The first step is to simply draw a number
of samples from the posterior, as in the original approach
(Algorithm~\ref{alg:posterior-sampling-query}). After the algorithm
calculates the posterior distribution $\bel(\cdot \mid x)$,
$\nsamples$ parameter samples are drawn from it, producing a parameter
set $\hat{\Params}$. Thereafter, responses depend only on the utility
function and the sample $\hat{\Params}$, and we do not draw new
samples after every query. This allows us to work with a fixed privacy
budget.

\begin{algorithm}[htb]
  \begin{algorithmic}[1]
    \STATE \textbf{input} prior $\bel$, data $x \in \CS$
    \STATE Calculate posterior $\bel(\param \mid x)$.
    \FOR {$k = 1, \ldots, \nsamples$}
    \STATE Sample $\param^{(k)} \sim \bel(\param \mid \data)$.
    \ENDFOR
    \STATE \textbf{return} $\hat{\Params} = \cset{\param^{(k)}}{k=1, \ldots, \nsamples}$.
  \end{algorithmic}
  \caption{BAPS: Bayesian Posterior Sampling}
  \label{alg:bayesian-posterior-sampling}
\end{algorithm}

\begin{corollary}
	Algorithm~\ref{alg:bayesian-posterior-sampling} is differentially private under the conditions of Theorem~\ref{thm:dp}, namely:
	\begin{enumerate}
  \item Under a pseudo-metric $\rho$ and $L>0$ satisfying Assumption~\ref{ass:hoelder-observations}, the algorithm is $(2L, 0)$-differentially private under pseudo-metric $\xdistChar$; or
  \item Under a pseudo-metric $\rho$ and $c > 1$ satisfying Assumption~\ref{ass:hoelder-measure-observations}, $\constFamily$ defined in~\eqref{eq:constFamily}, the algorithm is 
    $\left(0,  O(\sqrt{\constFamily (L_0 + 1/c)})\right)$-differentially private under pseudo-metric $\sqrt{\xdistChar}$.
    \end{enumerate}
\end{corollary}
\begin{proof}
	This follows directly from Theorems~\ref{thm:dp} and \ref{thm:composition} (composition), as the algorithm samples from the posterior distribution, which is differentially private.
\end{proof}

\paragraph{Utility and optimal responses.}
We assume the collection of a set of utility functions $\Utils = \cset{\util_\param}{\param \in \Params}$, such that the optimal response for a given parameter $\param$ is the one maximising a utility function $\util_\param : \Queries \times \Responses \to [0,1]$. If we know the true parameter $\param$, then we should respond to any query $\query$ with $\response \in \argmax_\response \util_\param(\query, \response)$. However, since $\param$ is unknown, we must select a method for conveying the required information. In a Bayesian setting, there are three main approaches we could employ. The standard methodology is to maximise \emph{expected utility} with respect to the posterior. This corresponds to marginalising out $\param$, and responding with:
\begin{eqnarray*}
	\response_t &\in& \argmax_\response \int_\Params \util_\param(\query_t, \response) \dd{\bel}(\param \mid x)\enspace.
  %\label{eq:expected-utility}
\end{eqnarray*}
The second is to use the {\em maximum a posteriori} value of $\param$. The final, which we employ here, is to use sampling; \ie to reply to each query using parameters sampled from the posterior. This allows us to reply to arbitrary queries without compromising privacy, since the most information an adversary could obtain is the set of sampled parameters. By adjusting the number of samples used, we can easily trade off between privacy and utility.

After this we respond to a series of queries. For the $t$-th received query $\query_t$, the algorithm returns the optimal response over the sampled parameter set $\hat{\Params}$, in the manner shown in Algorithm~\ref{alg:posterior-sampling-query}. Since we allow arbitrary queries, the third party could simply ask for $\hat{\Params}$ with a suitable choice of the utility function.
Then if $\util$ is bounded, it is easy to show that the loss due to sampling is bounded.
\begin{algorithm}[htb]
  \begin{algorithmic}[1]
    \STATE \textbf{input} Parameter sample $\hat{\Params}$.
    \FOR {$t=1, \ldots$}
    \STATE Observe query $\query_t \in \Queries$, perhaps depending on $\response_1,\ldots,\response_{t-1}$ and $\query_1,\ldots,\query_{t-1}$.
    \STATE \textbf{return} $\response_t \in \argmax_\response \sum_{\param \in \hat{\Params}} \util_{\param}(\query_t, \response)$
    \ENDFOR
  \end{algorithmic}
  \caption{PSAQR: Posterior Sample Query Response}
  \label{alg:posterior-sampling-query}
\end{algorithm}

\begin{lemma}
	The returned responses of the PSAQR mechanism have a utility which is within \linebreak[4] $\BigO{\sqrt{\ln(1/\delta)/\nsamples}}$ of the optimal value with probability at least $1 - \delta$ for any $\delta > 0$.
\label{lem:pac-utility}
\end{lemma}
Now that we have demonstrated bounds on the utility for the algorithm above, we turn to the issue of how utility and privacy can be optimally tuned. First, we try and quantify the amount of samples an adversary needs to distinguish two datasets.

\subsection{Distinguishability of Datasets}
\label{sec:bound-post-query}
In this section, we wish to relate the size of the sample $\hat{\Theta}$ to the amount of information about $x$ that can be obtained by the adversary \Adv{}. More precisely, we need to bound how well $\Adv{}$ can distinguish $x$ from all alternative datasets $y$. Within the posterior sampling query model, \Adv{} has to decide whether \Bay{}'s posterior is $\bel(\cdot \mid x)$ or $\bel(\cdot \mid y)$. However, he can only do so within some neighbourhood $\epsilon$ of the original data. In this section, we bound \Adv{}'s error in determining the posterior in terms of the number of samples used. This is analogous to the dataset-size bounds on queries in interactive models of differential privacy~\citep{Dwork06}, as well as the point of view of privacy as hypothesis testing~\citep{oh2013composition,wasserman2010statistical} where an adversary wishes to distinguish the dataset from two alternatives.

For this section, we consider a utility function whose optimal response is $\hat{\Theta}$.  This corresponds to the most powerful query possible under the model shown in Algorithm~\ref{alg:posterior-sampling-query}. Then, the adversary needs only to construct the empirical distribution to approximate the posterior up to some sample error. By bounds on the KL divergence between the empirical and actual distributions we can bound his power in terms of how many samples he needs in order to distinguish between $x$ and $y$.

Due to the sampling model, we first require a finite sample bound on the quality of the empirical distribution. The adversary could attempt to distinguish different posteriors by forming the empirical distribution on any sub-algebra $\algebra$.
\begin{lemma}
  For any $\delta \in (0,1)$, let $\partition$ be a finite partition of the sample space $\CS$, of size $m \leq \log_2 \sqrt{1/\delta}$, generating the $\sigma$-algebra $\algebra = \sigma(\partition)$. Let $x_1, \ldots, x_n \sim P$ be i.i.d. samples from a probability measure $P$ on $\CS$, let $P_{|\algebra}$ be the restriction of $P$ on $\algebra$ and let $\hat{P}^n_{|\algebra}$ be the empirical measure on $\algebra$. Then, with probability at least $1 - \delta$:
  \begin{equation}
    \label{eq:l1-empirical-error}
    \onenorm{\hat{P}^n_{|\algebra} - P_{|\algebra}} \leq \sqrt{\frac{3}{n} \ln \frac{1}{\delta}}\; . % ,
   %\qquad \forall \delta \in (0,1).
  \end{equation}
  \label{lem:l1-empirical-error}
\end{lemma}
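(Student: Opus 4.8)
The plan is to exploit the fact that restricting to $\algebra = \sigma(\partition)$ collapses the problem to a discrete distribution on the $m$ cells of $\partition$, so that the $L_1$ error is a finite sum amenable to standard concentration. Write $\partition = \{A_1, \ldots, A_m\}$ and set $p_j \defn P(A_j)$ and $\hat p_j \defn \frac 1n \tsum_{i=1}^n \ind{x_i \in A_j}$. Then $\onenorm{\hat{P}^n_{|\algebra} - P_{|\algebra}} = \tsum_{j=1}^m \abs{\hat p_j - p_j}$, and the first step is to linearise this absolute-value sum through its dual representation
\[
  \onenorm{\hat{P}^n_{|\algebra} - P_{|\algebra}}
  = \max_{s \in \{-1,+1\}^m} \tsum_{j=1}^m s_j (\hat p_j - p_j),
\]
the maximum being attained at $s_j = \sgn(\hat p_j - p_j)$.

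For each \emph{fixed} sign vector $s$, the inner quantity is an average of i.i.d.\ terms: since every sample lands in exactly one cell, $\tsum_{j=1}^m s_j(\hat p_j - p_j) = \frac 1n \tsum_{i=1}^n Z_i$, where $Z_i = s_{J(i)} - \tsum_j s_j p_j$ and $J(i)$ denotes the index of the cell containing $x_i$. Each $Z_i$ is zero-mean and takes values in an interval of width $2$, so Hoeffding's inequality for bounded independent variables gives $\Pr(\tsum_j s_j(\hat p_j - p_j) \ge \epsilon) \le \exp(-n\epsilon^2/2)$. A union bound over the $2^m$ sign vectors then yields
\[
  \Pr\!\left( \onenorm{\hat{P}^n_{|\algebra} - P_{|\algebra}} \ge \epsilon \right)
  \le 2^m \exp\!\left( -\tfrac{n \epsilon^2}{2} \right).
\]

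It remains to convert this tail bound into the stated high-probability guarantee. Setting the right-hand side equal to $\delta$ and solving gives $\epsilon = \sqrt{\tfrac 2n \bigl(m \ln 2 + \ln \tfrac 1\delta\bigr)}$. The final step---and the only place the hypothesis on the partition size enters---is to absorb the combinatorial factor $2^m$: the assumption $m \le \log_2 \sqrt{1/\delta}$ is exactly the condition $m \ln 2 \le \tfrac 12 \ln \tfrac 1\delta$, so that $m \ln 2 + \ln \tfrac 1\delta \le \tfrac 32 \ln \tfrac 1\delta$, and hence $\epsilon \le \sqrt{\tfrac 3n \ln \tfrac 1\delta}$, as claimed.

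I expect the delicate points to be bookkeeping rather than conceptual: getting the Hoeffding constant right (each $Z_i$ has range $2$, not $1$, since $Z_i = s_{J(i)} - \tsum_j s_j p_j$ with $s_{J(i)} \in \{-1,1\}$ and the mean term lying in $[-1,1]$), and verifying that the partition-size hypothesis is precisely calibrated to turn the $2^m$ union-bound penalty into the clean constant $3$. An alternative to the sign-vector/union-bound route would be to apply McDiarmid's bounded-differences inequality directly to $f(x_1,\dots,x_n) = \onenorm{\hat{P}^n_{|\algebra} - P_{|\algebra}}$ (changing one sample perturbs $f$ by at most $2/n$), but that would still require separately bounding $\E f$, so the dual-norm approach above is the more economical one.
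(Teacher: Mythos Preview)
Your proof is correct and follows essentially the same route as the paper. The paper simply invokes the Weissman et al.\ inequality $\Pr(\onenorm{\hat p_n - p} \ge \epsilon) \le (2^m-2)\,e^{-n\epsilon^2/2}$ as a black box and then performs the same substitution and the same use of the hypothesis $m \le \log_2\sqrt{1/\delta}$; your sign-vector dual plus Hoeffding plus union bound is precisely a self-contained derivation of (a mild weakening of) that inequality, so the two arguments coincide from the tail bound $2^m e^{-n\epsilon^2/2}$ onward.
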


We can combine this bound on the adversary's estimation error with Theorem~\ref{the:kl}'s bound on the KL divergence between posteriors resulting from similar data to obtain a measure of how fine a distinction between datasets the adversary can make after a finite number of draws from the posterior:
\begin{theorem}
  Under Assumption~\ref{ass:hoelder-observations}, the adversary can distinguish between data $x, y$ with probability $1 - \delta$ if:
  \begin{equation*}
     \xdist{x}{y} \geq \frac{3}{4Ln} \ln \frac{1}{\delta}\enspace.
  \end{equation*}
  Under Assumption~\ref{ass:hoelder-measure-observations}, this becomes:
  \begin{equation*}
    %\label{eq:adversary-sample-complexity-2}
    \xdist{x}{y} \geq \frac{6}{n\left(\constFamily c^{-1} + \ln \constFamily\right)} \ln \frac{1}{\delta}\enspace.
  \end{equation*}
  \label{the:adversary-sample-complexity}
\end{theorem}
%\begin{proof}See Appendix~\ref{sec:proofs}.\end{proof}

Consequently, either smoother likelihoods (\ie decreasing $L$), or a larger concentration on smoother likelihoods (\ie increasing $c$), increases the effort required by the adversary and reduces the sensitivity of the posterior. Note that, unlike the results obtained for differential privacy of the posterior sampling mechanism, these results have the same algebraic form under both assumptions.

\subsection{Trading off Utility and Privacy}
\label{sec:trad-util-priv}
By construction, in our setting there are three ways with which to tune privacy. The first is the choice of family; the second is the choice of prior; and the third is how many samples $\nsamples$ to draw. The choice of family is usually fixed due to other considerations. However, we have the choice of either tuning the prior, so that we can satisfy our assumptions with some suitable constants $L$ or $c$, or by tuning the number of samples $\nsamples$ in the posterior sampling framework.

The following lemma bounds the regret we suffer in terms of utility when the private posterior we use is $\bel$, in the case where the posterior we would like to use (assuming no privacy constraints) was $\bel^\star$.
\begin{lemma}
  If our utility is bounded in $[0,1]$, the private posterior we use is $\bel$, while the ideal posterior is $\bel^\star$, then the regret suffered is bounded by $2 \|\bel - \bel^\star\|_1$.
  \label{lem:utility-bound}
\end{lemma}
Finally, consider the case where \Bay{}, being a true Bayesian, is convinced that $\bel^\star$ is the correct prior distribution to use, but needs to use the prior $\bel$ in order to achieve privacy. The following theorem bounds the expected KL divergence between the two resulting posteriors.
\begin{lemma}
  If $\forall \param \in \Params$, $|\ln \bel^\star(\param) / \bel(\param)| \leq \eta$ then the expected KL divergence is
\begin{equation*}
  \E_{x \sim \marg^\star} D(\bel^\star(\cdot \mid x) \| \bel(\cdot \mid x)) \leq 2 \eta\; ,
\end{equation*}
where $\marg^\star$ is the $\bel^\star$ marginal distribution.
\label{lem:expected-kl-divergence}
\end{lemma}
We can now combine Lemmas~\ref{lem:pac-utility} and \ref{lem:utility-bound} with Lemma~\ref{lem:expected-kl-divergence}, to obtain the following result:
\begin{corollary}
	If \Adv{} has a preferred prior $\bel^\star$, while the private prior used by \Bay{} is $\bel$ and it satisfies the conditions of Lemma~\ref{lem:expected-kl-divergence}, then the loss of \Adv{} in terms of the $\bel^\star$-expected utility is $\BigO{\eta + \sqrt{\ln (1/\delta)/\nsamples}}$, with probability at least $1 - \delta$.
\end{corollary}
Consequently, if \Adv{} believes the correct prior should be $\bel^\star$, he can use the private posterior sample to make decisions, incurring a small loss. Finally, we already showed that \Adv{} cannot distinguish between data that are closer than $\BigO{1/\nsamples}$ with high probability. Hence, in this setting we can tune $\nsamples$ to trade off utility and privacy.

The following theorem characterises the link between the choice of prior, the number of samples, privacy and utility directly. This connects several of our results in one place.
\begin{theorem}
  	If, instead of using a non-private prior $\bel^\star$, we use a prior $\bel$ restricted on $\Params_L$ (such that it satisfies Assumption~\ref{ass:hoelder-observations} with constant $L$) and generate $\nsamples$ samples from the posterior, then (a) the sample is $2 L \nsamples$-differentially private and (b) the loss of \Adv{} in terms of the $\bel^\star$-expected utility is $\BigO{[1 - \bel^\star(\Params_L)] + \sqrt{\ln (1/\delta)/\nsamples}}$, with probability at least $1 - \delta$ for any $\delta > 0$.
\end{theorem}
\begin{proof}
For (a) note that due to composition, $\nsamples$ repetitions give $2L \nsamples$-differential privacy.
For (b), let $\Params_L$ be the support of $\bel$. Then, because $\bel$ is the restriction of $\bel^\star$ on $\Params_L$, it holds that:
\begin{align*}
  %\label{eq:support-bound}
  \onenorm{\bel - \bel^\star}
  & =
    \bel( \Params_L)
    -
    \bel^\star( \Params_L)
    +
    \bel^\star(\Params \setminus \Params_L)
    -
    \bel(\Params \setminus \Params_L)
  \\
  & =
    2[1 - \bel^\star( \Params_L)]\enspace.
\end{align*}
We now just need to couple this with Lemmas~\ref{lem:utility-bound} and \ref{lem:pac-utility} to directly obtain the stated bound on the utility.
\end{proof}
In practice, our choice of $\bel$ gives us a base amount of privacy that depends only on $L$. By keeping $\bel$ fixed and increasing $N$, we can easily trade off privacy and utility.

Finally, we should note that the adversary could choose any arbitrary estimator $\psi$ to guess $x$. Section~\ref{app:le-cam} below describes how to apply Le Cam's method to obtain matching lower bounds in this case, by defining \emph{dataset estimators} as a model for the adversary.

\subsection{Lower Bounds}
\label{app:le-cam}
It is possible to apply standard minimax theory to obtain lower bounds on the rate of convergence of the adversary's estimate to the true data. In order to do so, we can for example apply the method due to \citet{lecam1973convergence}, which places lower bounds on the expected distance between an estimator and the true parameter. In order to apply it in our case, we simply replace the parameter space with the dataset space.

Le Cam's method assumes the existence of a family of probability measures indexed by some parameter, with the parameter space being equipped with a pseudo-metric. In our setting, we use Le Cam's method in a slightly unorthodox, but very natural manner. Define the family of probability measures on $\Params$ to be:
\begin{equation*}
  %\label{eq:posterior-family}
  \Bels \defn \cset{\bel(\cdot \mid x)}{x \in \CS},
\end{equation*}
the family of posterior measures in the parameter space, for a specific prior $\bel$. Consequently, now $\CS$ plays the role of the parameter space,  while $\xdistChar$ is used as the pseudo-metric. The original family $\family$ plays no further role in this construction, other than a way to specify the posterior distributions from the prior.

Now let $\psi$ be an arbitrary estimator of the unknown data $x$. As in \citep{lecam1973convergence}, we extend $\xdistChar$ to subsets of $\CS$ via
\begin{align*}
\xdist{A}{B} &\defn \inf\cset{\xdist{x}{y}}{x \in A, y \in B}\; , \; \; \;
A, B \subset \CS\; .
\end{align*}

Now we can re-state the following well-known lemma for our specific setting.
\begin{lemma}[Le Cam's method]
  Let $\psi$ be an estimator of $x$ on $\Bels$ taking values in the metric space $(\CS, \xdistChar)$. Suppose that there are well-separated subsets $\CS_1, \CS_2$ such that $\xdist{\CS_1}{\CS_2} \geq 2 \delta$. Suppose also that $\Bels_1, \Bels_2$ are subsets of $\Bels$ such that $x \in \CS_i$ for $\bel(\cdot \mid x) \in \Bels_i$.
Then:
\begin{align*}
  \sup_{x \in \CS} \E_\bel (\xdist{\psi}{x} \mid x) \geq \delta \sup_{\bel_i \in \textrm{co}(\Bels_i)} \|\bel_1 \wedge \bel_2\|\enspace.
\end{align*}
\end{lemma}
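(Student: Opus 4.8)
The plan is to prove this as the classical two-point (more precisely, two-\emph{set}) Le Cam bound, transported to the present setting in which $\CS$ plays the role of the parameter space and $\Bels$ the role of the statistical model. The whole argument is driven by a single reduction: lower-bound the worst-case risk by a Bayes risk against a suitably chosen two-part prior, and then convert that Bayes risk into the affinity $\|\bel_1 \wedge \bel_2\|$ using only the triangle inequality for $\xdistChar$ together with the elementary inequality $ap + bq \geq (a+b)\min(p,q)$, valid for $a,b,p,q \geq 0$.

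First I would fix arbitrary probability measures $\pi_1$ on $\CS_1$ and $\pi_2$ on $\CS_2$ and set $\bel_i \defn \int \bel(\cdot \mid x)\, d\pi_i(x)$, so that $\bel_i \in \textrm{co}(\Bels_i)$; letting $\pi_1, \pi_2$ range over all such measures realises an arbitrary pair in the two convex hulls, which is the source of the final supremum. Writing $R_i \defn \int_{\CS_i} \E_\bel(\xdist{\psi}{x} \mid x)\, d\pi_i(x)$ for the two Bayes risks, the first observation is that a weighted average over $\CS_i$ never exceeds a supremum over $\CS$, so $\sup_{x \in \CS} \E_\bel(\xdist{\psi}{x} \mid x) \geq \max(R_1, R_2) \geq \tfrac{1}{2}(R_1 + R_2)$. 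It therefore suffices to show $R_1 + R_2 \geq 2\delta\, \|\bel_1 \wedge \bel_2\|$.

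The core step is a pointwise lower bound on the integrand of $R_1 + R_2$. Letting $q_x$ denote the density of $\bel(\cdot \mid x)$ against a common dominating measure and $\overline{q}_i \defn \int q_x\, d\pi_i$ the densities of $\bel_1, \bel_2$, I would use $\xdist{\psi(s)}{x} \geq \xdist{\psi(s)}{\CS_i}$ for every $x \in \CS_i$ (the definition of distance to a set) to bound, at each sample point $s$, the combined integrand below by $\xdist{\psi(s)}{\CS_1}\, \overline{q}_1(s) + \xdist{\psi(s)}{\CS_2}\, \overline{q}_2(s)$. Applying $ap + bq \geq (a+b)\min(p,q)$ makes this at least $[\xdist{\psi(s)}{\CS_1} + \xdist{\psi(s)}{\CS_2}]\min(\overline{q}_1(s), \overline{q}_2(s))$. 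The triangle inequality, taken as an infimum over both sets, gives $\xdist{\psi(s)}{\CS_1} + \xdist{\psi(s)}{\CS_2} \geq \xdist{\CS_1}{\CS_2} \geq 2\delta$, since the infimum of a sum of independent terms splits. Integrating over $s$ yields $R_1 + R_2 \geq 2\delta \int \min(\overline{q}_1, \overline{q}_2) = 2\delta\, \|\bel_1 \wedge \bel_2\|$, and combining with the averaging step and taking the supremum over $\bel_i \in \textrm{co}(\Bels_i)$ gives the claim.

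The step I expect to be most delicate is extracting the sharp constant $\delta$ rather than $\delta/2$. The naive route---turning $\psi$ into a test of $\CS_1$ versus $\CS_2$ and invoking the Neyman--Pearson lower bound on the sum of error probabilities---loses a factor of two, since bounding $\max$ by $\tfrac{1}{2}(\textrm{sum})$ is then uncompensated. Keeping the \emph{sum} of the two set-distances intact and only afterwards applying the triangle inequality, so that the full separation $2\delta$ (not $\delta$) is released before halving, is exactly what recovers the stated constant; this is the one place where care is genuinely needed. The remaining points are routine measure-theoretic bookkeeping: the existence of a common dominating measure for $\set{\bel(\cdot \mid x)}$, measurability of $s \mapsto \xdist{\psi(s)}{\CS_i}$, and a Fubini justification for interchanging the $d\pi_i$ and sample-space integrations.
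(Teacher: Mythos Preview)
The paper does not actually prove this lemma: it is simply restated from \citet{lecam1973convergence} as a ``well-known Lemma,'' reparametrised so that the dataset space $\CS$ plays the role of the parameter space. Your argument is the standard proof of the two-set Le Cam bound (as in, \eg \citet{yu1997assouad}) and is correct, including the handling of the constant via the sum-of-distances triangle inequality before halving.
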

This lemma has an interesting interpretation in our case. The quantity
\[
\E_\bel (\xdist{\psi}{x} \mid x)
=
\int_\Theta \xdist{\psi(\theta)}{x} \dd{\bel}(\theta \mid x)\enspace,
\]
is the expected distance between the real data $x$ and the guessed data $\psi(\theta)$ when $\theta$ is drawn from the posterior distribution. Consequently, it is possible to apply this method directly to obtain results for specific families of posteriors. These would of course be dependent on the family, the prior and the metric. While we shall not engage in this exercise, we point the interested reader to \citep{yu1997assouad}, which provides two simple examples with minimax rates of $O(n^{-4/9})$ and $O(n^{-4/5})$.

\section{Examples Satisfying our Assumptions}
\label{sec:examples}
In what follows we study, for different choices of likelihood and corresponding conjugate prior, what constraints can be placed on the prior's concentration to guarantee a desired level of privacy. These case studies closely follow the pattern in differential privacy research where the main theorem for a new mechanism is a set of sufficient conditions on (\eg Laplace) noise levels to be introduced to a response in order to guarantee a level $\epsilon$ of $\epsilon$-differential privacy.

For exponential families, we have the canonical form
$p_{\param}(x) = h(x) \exp\left\{
  \eta_{\param}^{\top} \stat(x) - A(\eta_{\param})\right\},$
where $h(x)$ is the base measure, $\eta_{\param}$ is the distribution's natural parameter
corresponding to $\param$, $\stat(x)$ is the distribution's
sufficient statistic, and $A(\eta_{\param})$ is its log-partition
function. For distributions in this family, under the absolute
log-ratio distance, the family of parameters $\Params_L$ of
Assumption~\ref{ass:hoelder-measure-observations} must satisfy, for all $x, y \in \CS$:
$\left|\ln\frac{h(x)}{h(y)} + \eta_{\param}^{\top}\left(\stat(x)-\stat(y)\right) \right| \le L \xdist{x}{y}.$
If the left-hand side has an amenable form, then we can quantify the
set $\Params_L$ for which this requirement holds.  Particularly, for distributions where
$h(x)$ is constant and $\stat(x)$ is scalar (\eg Bernoulli, exponential,
and Laplace), this requirement simplifies to
$\frac{\left|\stat(x)-\stat(y)\right|}{\xdist{x}{y}} \le
\frac{L}{\eta_{\param}}$. One can then find the supremum of the
left-hand side independent from $\param$, yielding a simple formula
for the feasible $L$ for any $\param$. For each example, a detailed proof can be found in Appendix~\ref{sec:example-proofs}.
Note that in the following examples, we are making the conventional assumption in machine learning that data are bounded ($||x||\leq B$).
Also we use $\xi(\theta)\mathbbm{1}_{[c_{1},c_{2}]}$ to denote the trimmed density function obtained by setting the density outside $[c_1, c_2]$ to zero and renormalising the density.
%\comment{CD: Clarify. Is this really true? This would make for a weird density. Are you sure we are not just setting everything beyond $c_1, c_2$ to zero and rescaling the density?}

%Suppose that we have $n$ data points drawn from an exponential.
%distribution with a rate parameter $\param > 0$, using an exponential
%prior with rate parameter $c>0$, where the exponential distribution
%is given by $p_{\param}(x) = \param \exp (-\param x)$ for $x \ge 0$.
%Because this prior is conjugate to the likelihood, it has a closed
%form as a gamma distribution: $\GammaDist(n+1, c + \sum{x_i})$.

We begin with a few simple examples for single observations, that are
nevertheless illustrative.
\begin{lemma}[Exponential-Exponential conjugate prior]
The exponential distribution $\Exp(x;\theta)$ with a trimmed exponential conjugate prior $\theta \sim \Exp(\theta;\lambda)\mathbbm{1}_{[c_{1},c_{2}]}$, $\lambda > 0$, satisfies Assumption~\ref{ass:hoelder-measure-observations} with parameter $c = \lambda$, $L_{0}=c_{1}$, $\constFamily=c_{2}/\min\left\{c_{1}e^{-c_{1}B},c_{2}e^{-c_{2}B}\right\}$ and metric $\rho(x, y) = |x - y|$.
\label{lem:exponential}
\end{lemma}
Consequently, the trimmed-exponential prior results in a posterior sampling mechanism that is $(0,\delta)$-DP under $\xdistChar$, with $\delta = \sqrt{\frac{1}{2} \constFamily(1 + 2 c_1 + 1/\lambda)}$. It is also $(0,\delta)$-DP under the classical definition if $x, y \in [0,1]$.

\begin{lemma}[Laplace-Exponential conjugate prior]
The distribution $Laplace(x;s,\mu)$ with a trimmed exponential conjugate prior $1/s=\theta \sim Exp(\theta;\lambda)\mathbbm{1}_{[c_{1},c_{2}]}$,  $\mu \in \Reals$, $s \ge 1/L$, $\lambda > 0$ satisfies Assumption~\ref{ass:hoelder-measure-observations} with parameters $c=\lambda$, $L_{0}=c_{1}$,
$$\constFamily =\begin{cases} \frac{c_{2}}{2\min\left\{\frac{1}{2c_{2}},\frac{1}{2c_{1}}\exp\left(\frac{-B-\mu}{c_{1}}\right)\right\}}\enspace, & x < \mu \\
                    \frac {c_{2}}{2\min\left\{\frac{1}{2c_{2}},\frac{1}{2c_{1}}\exp\left(\frac{\mu-B}{c_{1}}\right)\right\}}\enspace, &   x\geq \mu
       \end{cases}\enspace,$$
and metric $\rho(x, y) = |x - y|$.
\label{lem:laplace}
\end{lemma}
It should come as no surprise that the same type of $(0,\delta)$-privacy is achieved for the Laplace distribution with a trimmed exponential prior. Now we move on to an example from which  we draw multiple samples.

\begin{lemma}[Beta-Binomial conjugate prior]
The Binomial distribution $\Binomial(\theta, n)$, with prior $\theta \sim \Beta (\alpha, \beta)$, $\alpha = \beta > 1$ satisfies Assumption~\ref{ass:hoelder-measure-observations} for $L_{0}=\ln{n}$,
$c = 2^{-2\alpha+1}/B(\alpha)$, where $B(\alpha)$ denotes the beta function with parameters $\alpha=\beta$,
$$\constFamily=B(\alpha)/B\left(\frac{n+2\alpha-1}{2},\frac{n+2\alpha+1}{2}\right)$$
and metric $\rho(x, y) = \|x - y\|_1$, where $x, y \in \{0,1\}^n$. 
\label{lem:beta}
\end{lemma}
This is an example of a conjugate prior pair that is $(0,\delta)$-DP without trimming the prior, with $\delta=\sqrt{\frac{1}{2} \constFamily (1 + 2 \ln n + 2^{2\alpha-1} B(\alpha))}$. Unfortunately, $\delta$ is increasing with $n$, and as~\cite{posterior-thesis} shows, this result is essentially unimprovable with direct posterior sampling unless the prior is trimmed.

We next present two results on normal distributions.
\begin{lemma}[Normal distribution with known mean and unknown variance]
	The normal distribution $N(x;\mu, \sigma^2)$ with a trimmed exponential prior $1/\sigma^2=\theta\sim \Exp(\theta;\lambda)\mathbbm{1}_{[c_{1},c_{2}]}$ satisfies Assumption~\ref{ass:hoelder-measure-observations} with parameter $c=\frac{2\lambda}{\max\set{|\mu|, 1}}$, $L_{0}=\frac{c_{1}\max\set{|\mu|, 1}}{2}$, $$\constFamily=\min\left\{\sqrt{c_{2}/c_{1}}\exp\left(\frac{c_{1}c^{2}_{2}}{2}\right), \exp\left(\frac{c^{3}_{2}}{2}\right)\right\}$$ and metric $\rho(x, y) = \abs{x^2 - y^2} + 2 \abs{x-y}$.
\label{lem:normal}
\end{lemma}
This example is interesting, because privacy is achieved under a rather unusual metric. However, note that the posterior is classically $(0, 3 \delta)$-DP for data in $[0,1]$.

Unbounded observation spaces are generally a problem for privacy, even for finite parameter spaces, generally because likelihoods become vanishingly small, thus making log likelihood ratios arbitrarily large. However, the following two examples circumvent this problem.
In the first example, we consider a general multivariate extension of Lemma~\ref{lem:normal}.
In the second we consider the case of discrete Bayesian networks,
where privacy depends on the network connectivity and the probability of rare
events---we have also considered posterior sampling of networks under complementary
conditions, and output perturbation applied to posterior updates, in recent work~\citep{AAAI16}.
In these examples, data is usually not i.i.d. (depending on the choice of network or covariance matrix) and the observation space is not a product space. 

\begin{lemma}[Multivariate normal distribution]
The multivariate normal distribution $N(x;\mu, A^{-1})$ satisfies our Assumption~\ref{ass:hoelder-observations} with
$L=\frac{1}{2}(\sum^{n}_{i=1}\lambda^{2}_{i})^{\frac{1}{2}}\max\{1, ||\mu||_{2}\}$ under metric $\rho(x,y)=||xx^\top-yy^\top||_{F}+2||x-y||_{2}$.
%When covariance and mean are both unknown, it satisfies with $L=\frac{1}{2}(\sum^{n}_{i=1}\lambda^{2}_{i})^{\frac{1}{2}}\max\{1, ||\mu||_{2}\}$ under metric $\rho(x,y)=||xx'-yy'||_{F}+2||x-y||_{F}$.
When $\mu=0$, Assumption~\ref{ass:hoelder-observations} is satisfied with $L=\frac{1}{2}(\sum^{n}_{i=1}\lambda^{2}_{i})^{\frac{1}{2}}$ under metric $\rho(x,y)=||(xx^\top-yy^\top)||_{F}$.
\label{lem:multinormal}
\end{lemma}
%This is the more general case for multivariate normal distributions.
Once more, we achieved $(\epsilon, 0)$-DP under our metric, which implies a $(3 \epsilon, 0)$ classical DP for bounded data.

\begin{lemma}[Discrete Bayesian networks]
  Consider a family of discrete Bayesian networks on $K$ variables,
  $\family = \cset{P_\param}{\param \in \Params}$.  More specifically,
  each member $P_\param$, is a distribution on a finite space
  $\CS = \prod_{k=1}^K \CS_k$ and we write $P_\param(x)$ for the
  probability of any outcome $x = (x_1, \ldots, x_K)$ in $\CS$.  Let
  $\varepsilon \defn \min_{\param, x_{k}, x_{\Parents{k}}} P_\param(x_k
    \mid x_{\Parents{k}})$,
    be the smallest conditional probability in the graph, where
    $\Parents{k}$ are the parents of node $k$.

    Our observations can be independent samples
    $\cset{x^t}{t \in [T]}$ of dependent variables
    $x^t_1, \ldots, x^t_k$. Define the connectivity vector
    $v \in \Naturals^K$ such that $v_k = 1 + \degree{k}$, where
    $\degree{k}$ is the out-degree of node $K$. We now define the
    distance between two datasets $x,y$ to be
  \[
  \xdist{x}{y} \defn v^\top \delta(x, y),
  \qquad
  \delta_k(x,y) \defn \sum_{t=1}^T \ind{x_{k,t} \neq y_{k,t}}.
  \]
  Then Assumption~\ref{ass:hoelder-observations} is satisfied with
  $L = \ln 1 / \varepsilon$.
\label{lem:dbn}
\end{lemma}
Consequently, discrete Bayesian networks, endowed with any prior on
the family given in the above example, are $(2 \ln 1/\varepsilon, 0)$-DP under
$\xdistChar$. This also implies that they are
$2 \|v\|_\infty \ln 1/\varepsilon$-DP under the classical definition.

A simple application of this example is to data drawn from a Markov model on a finite state space. In particular, consider a time-homogeneous family of transition matrices $\theta_{i,j} \defn P_\param(x_{t+1} = i \mid x_t = j)$. Then a prior consisting of product of truncated Dirichlet distributions that bound all multinomial probabilities above $\varepsilon$ satisfies our assumptions and results in a $4 \ln 1/\varepsilon$-DP mechanism.

The above examples demonstrate that our assumptions are reasonable. In fact, for several of them we recover standard choices of prior distributions. However, for the privacy guarantees to be reasonable, it is best to restrict the prior to a set of parameters that is not very sensitive.

%Combined then, with the following theorem, we show that differential privacy of the Bayesian network's posterior distribution can be achieved. %when the conditional likelihood at every node of the Bayesian network satisfy Assumption~\ref{ass:hoelder-observations} or Assumption~\ref{ass:hoelder-measure-observations}.

\section{Discussion}
\label{sec:conclusion}
We have presented a unifying framework for private and secure inference in a Bayesian setting. Under concentration conditions on the prior, we have shown that Bayesian inference is both robust and private. Firstly, we prove that similar datasets result in posterior distributions with small KL divergence. Secondly, we establish that the posterior is differentially private.  This allows us to use a general posterior sampling mechanism for responding to queries, where privacy and utility are easy to trade off by adjusting the number of samples taken.

Owing to the fact that no additional machinery is required, this framework may  serve as a fundamental building block for more sophisticated, private Bayesian inference. As an additional step towards this goal, we have demonstrated the application of our framework to deriving analytical expressions for well-known distribution families, and for discrete Bayesian networks.  Finally, we bounded the amount of effort required of an attacker to breach privacy when observing samples from the posterior. This serves as a principled guide for how much access can be granted to querying the posterior, while still guaranteeing privacy.

\emph{Conversion of our results to the neighbourhood formulation.} We state most of our results on specific models using a distance based on a sufficient statistic. Hence, to convert these to standard differential privacy, we only need to bound the $\rho$-distance of any neighbouring datasets. A good example are DBNs, where the case $\rho(x,y)=1$ corresponds exactly to that of one record changing in a databse.

\emph{Practical application of our results.} In general, it is hard to verify whether an existing model family will satisfy DP, because it implies checking whether the log-likelihood function is Lipschitz. Some parametric conjugate families, like the ones we examined in the examples, are amenable to analytic treatment. In practice, though, this might not be possible. It is for this reason that we propose to use rejection sampling in order to sample from the truncated posterior distribution. In particular, it is possible to resample from the posterior distribution, until a sample within the allowed interval of parameters is obtained. This is an approach we recently used in an  application paper successfully~\citep{AAAI16}.

\subsection{Related Work}
\label{sec:related-work}
%Differential privacy, first proposed by \citet{Dwork06}, has achieved
%prominence in the theory of computer science, databases, and more
%recently learning communities. Its success is largely due to the
%semantic guarantee of privacy it formalises. Differential privacy is %normally defined with respect to a randomised mechanism for responding to queries. Informally, a  mechanism preserves differential privacy if perturbing one training instance results in a small change to the mechanism's response distribution; a formal definition is deferred to Section~\ref{sec:setting}.

In the past, little research in differential privacy focused on the Bayesian paradigm, with~\citet{alt:robust} being the first to establish conditions for differentially-private Bayesian inference. Nevertheless, our paper has many interesting links with both previous and follow up work, with respect to differential privacy, robustness and Bayesian inference, which we outline below. First, we discuss relations to other mechanisms achieving differential privacy and theoretical works about differential privacy; secondly, we discuss related work on the connection between robustness and privacy; and we conclude the related work section with a discussion of previous versions of this paper and follow-up work.

\subsubsection{Differential Privacy}
\label{sec:other-mechanisms}

In our paper, we employ a Bayesian framework whereby optimal responses are characterised by the fact that they maximise expected utility.
In Bayesian statistical decision
theory~\citep{Berger-SDT-1985,Bickel-Doksum-MS-2001,Degroot:OptimalStatisticalDecisions},
learning is cast as a statistical inference problem and
decision-theoretic criteria are used as a basis for assessing,
selecting and designing procedures. In particular, for a given utility
function, the Bayes-optimal procedure maximises the expected utility
under the posterior distribution.

In our setting, however, decisions using the data are not taken by the
statistician \Bay{}. Instead, \Adv{} provides a utility function, and
trusts \Bay{} to give him responses to queries that maximise expected
utility. However \Bay{} must also balance the need for privacy of the data provider, which results in some utility loss for \Adv{}. This is naturally captured by the difference in utility by making the decision private. This idea had already been explored in the exponential mechanism by~\citet{MechDesign}, which connected differential privacy to mechanism design.

The exponential mechanism can be seen as a generalisation of the
\emph{Laplace mechanism}, which adds Laplace noise to released
statistics~\citep{Dwork06}. The exponential mechanism releases a
response with probability exponential in a utility function describing
the usefulness of each response, with the best response having maximal
utility. An alternate approach, employed for privatising regularised
empirical-risk minimisation~\citep{dpERM}, is to alter the inferential
procedure itself, in that case by adding a random term to the primal
objective. We view our posterior sampling mechanism as a Bayesian
counterpart. Further results on the accuracy of the exponential
mechanism with respect to the Kolmogorov-Smirnov distance are given
in~\citep{wasserman2010statistical}, which introduced the concept of
privacy as hypothesis testing where an adversary wishes to
distinguish two datasets. This is similar to our notion of dataset distinguishability.

\emph{Learning from private data.} In a different direction, \citet{duchi:local-privacy} provided information-theoretic bounds for private learning. This essentially represents the protocol for interacting with an adversary as an arbitrary conditional distribution, rather than restricting it to specific mechanisms or models. In this way, they obtain fundamental bounds on rates of convergence from differentially-private views of data.

\emph{Bayesian inference and privacy.} Other work at the intersection of privacy and Bayesian inference includes that of \citet{ProbInfDP} who applied Bayesian inference to improve the utility of
differentially-private releases by computing posteriors in a noisy measurement model.
In a similar vein, \citet{xiao2012bayesian} used Bayesian credible intervals to respond to queries with as high utility as possible, subject to a privacy budget. In the PAC-Bayesian setting, \citet{mir2012differentially} showed that the Gibbs estimator~\citep{MechDesign} is differentially private. While their algorithm corresponds to a posterior sampling mechanism, it is a posterior found by minimising risk bounds; by contrast, our results are purely Bayesian and come from conditions on the prior. It is also worthwhile noting that our Assumption 1 can in some cases be made equivalent to the definition of Pufferfish privacy~\citep{kifer2014pufferfish}, a privacy concept with Bayesian semantics. Thus, our results imply that in some cases Pufferfish privacy also results in differential privacy.
Finally, independently to our preliminary work~\citep{alt:robust}, \citet{smola:icml} later proved differential privacy results for Gaussian processes under similar assumptions.

\subsubsection{Robustness and Privacy}
\label{sec:robustness-privacy}
\citet{dwork2009differential} made the first connection between (frequentist) robust statistics and differential privacy, developing mechanisms for the interquartile, median and $B$-robust regression. While robust statistics are designed to operate near an ideal distribution, they
can have prohibitively high global, worst-case sensitivity. In this case privacy was still achieved by performing a differentially-private test on local sensitivity before release~\citep{dpStats}. In later work, \citet{dwork2015reusable} show that differentially-private views of the data result in good generalisation abilities. We discuss this more extensively in Section~\ref{sec:follow-up}.

In a similar vein~\citet{chaudhuri2012convergence} drew a quantitative connection between robust statistics and differential privacy by providing finite-sample convergence rates for differentially-private plug-in statistical estimators in terms of the \emph{gross error sensitivity}, a common measure of robustness. These bounds can be seen as complementary to ours because our Bayesian estimators do not have private views of the data but use a suitably-defined prior instead.

Smoothness of the learning map, achieved here for Bayesian inference by appropriate concentration of the prior, is related to \emph{algorithmic stability} which is used in statistical learning theory to establish error rates~\citep{Bousquet02}. \citet{privateSVM} used $\gamma$-uniform stability to calibrate the level of noise when using the Laplace mechanism to achieve differential privacy for the SVM. \citet{InfiniteHall} extended this technique to adding Gaussian process noise for differentially private release of infinite-dimensional functions lying in an RKHS.

 In the Bayesian setting, robustness is typically handled through
maximin policies. This is done by assuming that the prior distribution is
selected arbitrarily by nature.  In the field of robust statistics,
the minimax asymptotic bias of a procedure incurred within an
$\varepsilon$-contamination neighbourhood is used as a robustness
criterion giving rise to the notions of a procedure's \emph{influence
  function} and \emph{breakdown point} to characterise
robustness~\citep{Hampel-Ronchetti-RS-1986,Huber-RS-1981}.  In a
Bayesian context, robustness appears in several guises including
minimax risk, robustness of the posterior within
$\varepsilon$-contamination neighbourhoods, and robust
priors~\citep{Berger-SDT-1985}.  In this context
\citet{Gruenwald-Dawid-AS-2004} demonstrated the link between
robustness in terms of the minimax expected score of the likelihood
function and the (generalised) maximum entropy principle, whereby
nature is allowed to select a worst-case prior.

\subsubsection{Previous Versions and Follow Up Work}
\label{sec:follow-up}
Finally, we note that preliminary versions of this work appeared on arXiv~\citep{arxiv:robust} and ALT~\citep{alt:robust}. This version corrects technical issues with one proof, which affected the leading constants. We also replaced the original mechanism with one taking a fixed sample, which allows us to maintain a fixed privacy budget for an arbitrary number of queries. We make a novel use of Le Cam's method to prove lower bounds on indistinguishability, and we complement our original bounds with bounds for the utility of the mechanism. Finally, we discuss the relationship between posterior sampling, the exponential mechanism and the \emph{safe Bayesian} generalisation of Bayesian inference. Follow-up work includes: \citet{smola:icml} who, under similar assumptions proved differential privacy results for Gibbs samplers; \citet{posterior-thesis} who improved some of our original bounds and also presented new results for other members of the exponential family; and \citet{AAAI16} who recently initiated the exploration of the posterior sampler in probabilistic graphical models on multiple random variables.

Another important follow up work is that of \citet{dwork2015reusable}.
 They have shown that \emph{any} differentially private algorithm results in robustness, in the sense that the divergence between posterior distribution arising from similar data is small. This has a direct impact on the generalisation ability of statistical models and inferences drawn, and consequently allows for what they call the ``re-usable hold-out''.
In our work, on the other hand, we have shown that with the right choice of prior, Bayesian inference is both private and robust. We have also shown that if the posterior distribution is robust, then it is also differentially private. In conclusion, robustness and privacy appear to be deeply linked, as our works have jointly shown conditions when one implies the other in three different ways: not only the same sufficient conditions can achieve both privacy and robustness, but privacy can also imply robustness, and robustness implies privacy. Further links between the two concepts are likely, as explained in the next section.

\subsection{Future Directions}
\label{sec:future-directions}
Although we have shown how Bayesian inference can already be differentially private by appropriately setting the prior,
we have not examined how this affects learning. While larger $c$ improves privacy, it also concentrates the prior so much that learning would be inhibited. Thus, $c$ could be chosen to optimise the trade-off between privacy and learning. However, we believe that the choice of the number of samples is easier to control.

\iffalse
From the theoretical side, we believe that the constant $\constFamily$
could be substantially improved, since right now it seems to be rather
loose. It is also possible that its existence is only an artefact of
the analysis, since it only appears for
Assumption~\ref{ass:hoelder-measure-observations}. However, we thought
it crucial to include these results in the paper, since they are
connected to the second necessary condition.
\fi

Other future directions include investigating the links between posterior sampling and the exponential mechanism, as well as with the safe Bayesian approach~\citep{grunwald2012safe} to inference. Consider an exponential mechanism which, given a utility function $u : \Params \times \Queries \to \Reals$ and a base measure $\mu$ on $\Params$ returns $\param \in \Params$ sampled from the density
\[
f(\theta) \propto e^{\epsilon u(\theta, \query)} \frac{\dd \mu(\param)}{\dd \lambda}\enspace.
\]
As also noted by~\citet{smola:icml}, this has a similar form to the posterior distribution, by setting $u(\theta, \query) = \ln p_\theta(x)$ and setting $\mu = \bel$ to the prior. This idea was used independently by~\citet{AAAI16} for releasing MAP point estimates. In this framework, privacy is achieved by setting $\epsilon$ to a sufficiently small value. However, it is interesting to note that this is how~\citet{grunwald2012safe} obtains robustness results for modified Bayesian inference. This implies that in some cases we can gain both privacy and efficiency. We note that in our case, we have proven that privacy is attainable by altering the prior, which corresponds to the base measure in the exponential mechanism. Consequently, we believe it is worthwhile examining settings where adjusting both $\epsilon$ and the prior measure may be advantageous.

\paragraph{Acknowledgments.} We gratefully thank Aaron Roth, Kamalika Chaudhuri, and Matthias Bussas for their discussion and insights as well as the anonymous reviewers for their comments on the paper, which helped to improve it significantly. This work was partially supported by the Marie Curie Project ``Efficient Sequential Decision Making Under Uncertainty'', Grant Number 237816; the People Programme (Marie Curie Actions) of the European Union's Seventh Framework Programme (FP7/2007-2013) under REA grant agreement n° 608743; the SNSF Project, ``SwissSenseSynergia''; and the Australian Research Council (DE160100584).

%%% Local Variables:
%%% mode: latex
%%% TeX-master: "jmlr"
%%% End:

\appendix
\section{Proofs of Main Results}
\label{sec:proofs}

\begin{proofof}{Lemma~\ref{lma:prodDist}}
  For Assumption~\ref{ass:hoelder-observations}, the proof follows directly from the definition of the absolute log-ratio distance; namely,
  \begin{align*}
    |\ln p_{\param}^n(\set{x_i}) - \ln p_{\param}^n(\set{y_i})| &\leq \tsum_{i=1}^{n}|\ln p_{\param}(x_i) - \ln p_{\param}(y_i)| \\
    & \le L \tsum_{i=1}^{n}{\xdist{x_i}{y_i}} \enspace.
  \end{align*}
  % This can be reduced from $n$ to $k$ if only $k$ items differ since
  % $\lrdist{p_{\param}(x_i)}{p_{\param}(y_i)}=0$ if $x_i=y_i$.

  For Assumption~\ref{ass:hoelder-measure-observations}, consider sub-family $\Params_L$ from Eq.~\eqref{eq:hoelder-measure-set} for marginal $p_{\param}$ and pseudo-metric $\xdistChar$, and define the corresponding sub-family $\Params^n_L$ %(or $\Params_{L \cdot k}$ for the $k$ differing items case)
  in terms of product distribution $p_{\param}^n$ and pseudo-metric $\xdistChar^n$.
  Then the same argument as above shows that $\Params_L\subseteq\Params_L^n$. Hence, the same prior and parameter $c$ yield the lower bound of Eq.~\eqref{eq:hoelder-measure-observations}, for $\Params_L^n$. %(or $\frac{c}{k}$)
\end{proofof}

\begin{proofof}{Theorem~\ref{the:kl}}
  Let us now tackle claim~\ref{the:kl-1}.
  First, we can decompose the KL-divergence into two parts.
  \begin{eqnarray}
    \dist{\bel(\cdot \mid x)}{\bel(\cdot \mid y)} % \nonumber \\
    &=&
    \int_{\Params}
    \ln \frac{\dd{\bel}(\param \mid x)}{\dd{\bel}(\param \mid y)} \dd{\bel}(\param \mid x) \nonumber \\
    &=&
    \int_{\Params}
    \ln
    \frac{p_\param(x)}{p_\param(y)}
    \dd{\bel}(\param \mid x)
    +
    \int_{\Params}
    \ln
    \frac{\marg(y)}{\marg(x)}
    \dd{\bel}(\param \mid x)
    \nonumber \\
    &\leq&
    \int_{\Params}
    \left|
      \ln
      \frac{p_\param(x)}{p_\param(y)}
    \right|
    \dd{\bel}(\param \mid x)
    +
    \int_\Params
    \ln
    \frac{\marg(y)}{\marg(x)}
    \dd{\bel}(\param \mid x)
    \nonumber \\
    &\leq&
    L\xdist{x}{y}
    +
    \left|\ln \frac{\marg(y)}{\marg(x)}\right|\enspace.
    \label{eq:kl-posterior-a}
  \end{eqnarray}

  From Assumption~\ref{ass:hoelder-observations},
  $p_\param(y) \leq \exp(L\xdist{x}{y}) p_\param(x)$ for all $\param$ so:
  \begin{align*}
    \marg(y)
    &= \int_\Params p_\param(y) \dd{\bel}(\param)
    \nonumber \\
    & \leq
    \exp(L\xdist{x}{y}) \int_\Params p_\param(x) \dd{\bel}(\param)
    % \\ &
    =
    \exp(L\xdist{x}{y}) \marg(x)\enspace.
  \end{align*}
  Combining this with \eqref{eq:kl-posterior-a} we obtain
  \begin{equation*}
    %\label{eq:kl-posterior-bound}
    \dist{\bel(\cdot \mid x)}{\bel(\cdot \mid y)}
    \leq
    2L\xdist{x}{y}\enspace.
  \end{equation*}
  Claim~\ref{the:kl-2} is dealt with similarly.
  Once more, we can break down the distance in parts. In more detail, we first write:
  \begin{align*}
    \dist{\bel(\cdot \mid x)}{\bel(\cdot \mid y)}
    &\leq
    \underbrace{
      \int_{\Params}
      \left|
        \ln
        \frac{p_\param(x)}{p_\param(y)}
      \right|
      \dd{\bel}(\param \mid x)
    }_{A}
    +
    \underbrace{
      \int_{\Params}
      \ln
      \frac{\marg(y)}{\marg(x)}
      \dd{\bel}(\param \mid x)
    }_{B}\enspace,
  \end{align*}
  as before. Now, let us re-write the $A$ term as
  \begin{align*}
    \int_{\Params}
    \left|
      \ln
      \frac{p_\param(x)}{p_\param(y)}
    \right|
    \frac{p_\param(x)}{\marg(x)}
    \dd{\bel}(\param)
    &\leq
    \sup_{\param'}\frac{p_{\param'}(x)}{\marg(x)}
    \int_{\Params}
    \left|
      \ln
      \frac{p_\param(x)}{p_\param(y)}
    \right|
    \dd{\bel}(\param)\enspace,
  \end{align*}
  so that the left-hand side term is the ratio between the maximal likelihood and marginal likelihood.  Using the same steps, we can bound $B$ in the same manner.

Now, let us define a data-dependent and a data-independent bound:
  \begin{align*}
    \constFamily(x)
    &\defn
    \sup_{\param} \frac{p_\param(x)}{\marg(x)}\enspace,
    &
    \constFamily &\defn \sup_x \constFamily(x)\enspace.
%    \label{eq:magic-constant}
  \end{align*}
  Replacing, we obtain:
  \begin{align*}
    \dist{\bel(\cdot \mid x)}{\bel(\cdot \mid y)}
    &\leq
    \constFamily
    \underbrace{
      \int_{\Params}
      \left|
        \ln
        \frac{p_\param(x)}{p_\param(y)}
      \right|
      \dd{\bel}(\param)
    }_{A}
    +
    \underbrace{
      \int_{\Params}
      \ln
      \frac{\marg(y)}{\marg(x)}
      \dd{\bel}(\param \mid x)
    }_{B}\enspace.
  \end{align*}

  Now, to bound the individual terms, we start from $A$ and note that theorem 3 of~\citep{norkin1986stochastic} on the Lipschitz property of the expectation of stochastic Lipschitz functions applies.
  \begin{theorem}\citep{norkin1986stochastic}
    If  $\bel$ is a probability measure on $\Params$ and $f : \CS \times \Params \to \Reals$ is a $\bel$-measurable function, such that for any $\param \in \Params$, $f(\cdot, \param)$ is $\ell(\param)$-Lipschitz, then the function $f_\bel(x) \defn \E_\bel f(x, \param)$ is $L_\bel$-Lipschitz, where $\L_\bel = \E_\bel \ell(\param)$.
  \end{theorem}

Recall that the expectation of a non-negative random variable can be written in terms of its CDF $F$ as $\int_0^\infty [1 - F(t)] \dd t$.
In our case, $\ell(\param)$ is a random variable on $\Params$, and we can write its cumulative distribution function as
\[
F(t) \defn \bel\left(\cset{\param \in \Params}{\ell(\param) \leq t}\right) = \bel(\Params_t)\enspace,
\]
by the definition of $\Params_t$.
It follows that $\ln p_\param(x)$ is $L_\bel$-Lipschitz, where through the formula for the expectation of positive variables:
\begin{align}
%L_\bel
%&= \int_0^\infty [1 - \bel(\Theta_t)] \dd t
%\leq \int_0^\infty e^{-ct} \dd t
%= c^{-1}.
%\label{eq:expected-L}
%\\
L_\bel
&= \int_0^\infty [1 - \bel(\Theta_t)] \dd t
\leq
L_0 \bel(\Params_{L_0})+
  [1 - \bel(\Params_{L_0})] \int_{0}^\infty e^{-ct} \dd t
\leq
L_0  + c^{-1}\enspace.
\label{eq:expected-L}
\end{align}
So, term $A$ becomes $\constFamily \left(L_0 + c^{-1} \right) \xdist{x}{y}$.

Now let us move on to term $B$. For technical reasons, we start by considering a pair $x, y$ such that $\xdist{x}{y} \leq c - 1$. This also implies  that $c > 1$, since the distance cannot be negative.
\begin{align}
  \frac{\marg(x)}{\marg(y)}
  &
    \overset{(a)}{=} \int_\Params \frac{p_\param(x)}{\marg(y)} \dd \bel(\param)
    \overset{(b)}{\leq} \int_\Params \frac{p_\param(y) e^{\ell(\param) \xdist{x}{y}}}{\marg(y)} \dd \bel(\param)
    \overset{(c)}{\leq} \constFamily \int_\Params e^{\ell(\param) \xdist{x}{y}} \dd \bel(\param) \enspace.
\end{align}
Note that $\cset{\param \in \Params}{e^{\ell(\param) \xdist{x}{y}} \leq t} = \cset{\param \in \Params}{\ell(\param) \leq \xdist{x}{y}^{-1} \ln t} = \Params_{\xdist{x}{y}^{-1} \ln t}$. So the CDF of the random variable $e^{\ell(\param)}$ is $F(t) = \bel(\Params_{\xdist{x}{y}^{-1} \ln t})$. Then:

\iffalse
\comment{Just modifying the previous version to see what we get:}
\begin{align}
\E_\bel e^{\ell(\theta) \xdist{x}{y}}
&=
\int_0^\infty [1 - \bel(\Params_{\xdist{x}{y}^{-1} \ln t})]
\dd t
\\
&\leq
e^{L_0} + 
\int_{t_0}^\infty [1 - \bel(\Params_{\xdist{x}{y}^{-1} \ln t})]
\dd t,
& t_0 & = e^{\xdist{x}{y} L_0}
\\
&\leq
e^{L_0} + \int_{t_0}^\infty e^{- \xdist{x}{y}^{-1} c \ln t} \dd t \
 =
e^{L_0} + \int_{t_0}^\infty t^{-\xdist{x}{y}^{-1} c} \dd t\\
&=
e^{L_0} + \left[
\frac{t^{1-c \xdist{x}{y}^{-1}}}{1-c \xdist{x}{y}^{-1}}
\right]_{t_0}^\infty
=
e^{L_0} + \frac{t_0^{1-c \xdist{x}{y}^{-1}}}{c \xdist{x}{y}^{-1} - 1}
\\
&=
e^{L_0} + \frac{e^{L_0(\xdist{x}{y} - c)}}{c \xdist{x}{y}^{-1} - 1}
\leq
e^{L_0} + e^{L_0(\xdist{x}{y} - c)} \xdist{x}{y}
\\
&\leq
e^{L_0} + e^{L_0} \xdist{x}{y}
=
e^{L_0}(1 +  \xdist{x}{y})
\leq e^{L_0}e^{\xdist{x}{y}}\enspace.
\end{align}
Consequently, $\ln \marg(x)/\marg(y) \leq \constFamily (L_0 + \xdist{x}{y})$. 
\comment{The $L_0$ term there seems inevitable the way I do this.}
\fi

%\comment{An alternative method relies on the definition of the expectation of powers, below:}
For positive random variables, $\E X^\xdistChar = \xdistChar \int_0^\infty t^{\xdistChar - 1} [1 - F(t)] dt$. Applying this to our case, we get:
\begin{align*}
\E_\bel e^{\ell(\theta) \xdist{x}{y}}
&=
\E_\bel [e^{\ell(\theta) \xdist{x}{y}} \mid \ell \leq L_0] \bel(\Params_{L_0})
+
\E_\bel [e^{\ell(\theta) \xdist{x}{y}} \mid \ell > L_0] [1 - \bel(\Params_{L_0})]
\\
&\leq
e^{L_0 \xdist{x}{y}}
+
\xdist{x}{y} \int_{t_0}^\infty t^{\xdist{x}{y} - 1} [1 - \bel(\Params_{\ln t})] \dd t\\
    \tag{where $t_0 = e^{L_0}$}
&\leq
e^{L_0 \xdist{x}{y}}
+
\xdist{x}{y} \int_{t_0}^\infty e^{\ln t[\xdist{x}{y} - 1]} e^{-c(\ln t - L_0)} \dd t
\\
&=
e^{L_0 \xdist{x}{y}}
+
\xdist{x}{y} \int_{t_0}^\infty e^{\ln t[\xdist{x}{y} - c - 1] + cL_0} \dd t
\\
&=
e^{L_0 \xdist{x}{y}}
+
\xdist{x}{y} e^{c L_0} \int_{t_0}^\infty t^{\xdist{x}{y} - c - 1} \dd t
\\
&=
e^{L_0 \xdist{x}{y}}
+
\xdist{x}{y} e^{c L_0}
\frac{t_0^{\xdist{x}{y} - c}}{c - \xdist{x}{y} }
\\
&=
e^{L_0 \xdist{x}{y}}
+
\xdist{x}{y} e^{c L_0}
\frac{e^{L_0(\xdist{x}{y} - c)}}{c - \xdist{x}{y} }
\\
&\leq
e^{L_0 \xdist{x}{y}}
+
\xdist{x}{y} e^{c L_0}
e^{L_0(\xdist{x}{y} - c)}
\\
&=
e^{L_0 \xdist{x}{y}}
+
\xdist{x}{y} e^{L_0 \xdist{x}{y}}
=
(1 + \xdist{x}{y}) e^{L_0 \xdist{x}{y}}
\leq
e^{(1 + L_0) \xdist{x}{y}}.
\end{align*}
Consequently, $\ln \marg(x)/\marg(y) \leq \constFamily (1 + L_0) \xdist{x}{y}$. %\comment{This seems to be the best way. For $L_0 = 0$ we obtain the original bound.}

\iffalse
\comment{Previous version:
\begin{align}
\E_\bel e^{\ell(\theta) \xdist{x}{y}}
&\leq
\int_0^\infty [1 - \bel(\Params_{\xdist{x}{y}^{-1} \ln t})]
\dd t
\leq
1 +
\int_1^\infty [1 - \bel(\Params_{\xdist{x}{y}^{-1} \ln t})]
\dd t\\
&\leq
1 + \int_1^\infty e^{- \xdist{x}{y}^{-1} c \ln t} \dd t \
 =
1 + \int_1^\infty t^{-\xdist{x}{y}^{-1} c} \dd t\\
&=
1 + \left[
\frac{t^{1-c \xdist{x}{y}^{-1}}}{1-c \xdist{x}{y}^{-1}}
\right]_1^\infty
=
\frac{c}{c-\xdist{x}{y}}
\leq
1 + \rho(x, y),
\end{align}
where the integral converges since $c > \xdist{x}{y}$ and the last
inequality follows from the fact that $\rho < c - 1$.  Then note
that $\ln \rho + 1 \leq \rho$. Consequently $\ln \marg(x)/\marg(y) \leq \constFamily \xdist{x}{y}$. 
}
\fi
To handle
larger distances $\rho$, we can simply apply the above result
repeatedly between $k$ datasets $z_1, \ldots, z_{k}$, where $z_1 = x$,
$z_k = y$ and such that $\xdist{z_i}{z_{i+1}} < c - 1$.\footnote{Technically, the dataset space is a complete metric space for the intermediate points to exist.}  By chaining
logarithmic ratios, \ie using the fact that
$\ln \marg(x) / \marg(y) = \ln \marg(x) / \marg(z) + \ln \marg(z) /
\marg(y)$ we can now extend our result to general pairs for term $B$.
Replacing those terms, we now obtain the final result.
\[
\dist{\bel(\cdot \mid x)}{\bel(\cdot \mid y)}
\leq
\constFamily 
\left(
  1 + 2L_0 + c^{-1}
\right)
\xdist{x}{y}\enspace.
\]
\iffalse
\comment{Results without $L_0$: Note the new one should be an upper bound if $L_0 = 0$.
Replacing those terms, we now obtain the final result.
\[
\dist{\bel(\cdot \mid x)}{\bel(\cdot \mid y)}
\leq
\left\{
\constFamily
c^{-1}
+
\ln
\constFamily
\right\}
\xdist{x}{y}\enspace.
\]
}
\fi
\end{proofof}

\begin{proofof}{Theorem~\ref{thm:dp}}
  For part~\ref{thm:dp1}, we assumed that there is an $L>0$ such that $\forall x, y \in \mathcal{\CS}$,  $\left| \log \frac{p_\param(x)}{p_\param(y)} \right| \leq L \xdist{x}{y}$, thus implying $\frac{p_\param(x)}{p_\param(y)} \leq \exp\{ L \xdist{x}{y} \}$. Further, in the proof of Theorem~\ref{the:kl}, we showed that $\marg(y) \le \exp\{ L \xdist{x}{y} \} \marg(x)$ for all $x,y \in \mathcal{\CS}$. From Eq.~\eqref{eq:posterior}, we can then combine these to bound the posterior of any $B \in \field{\Params}$ as follows for all $x,y \in \CS$:
  \begin{align*}
    \bel(B \mid x)
    % & = \frac{\int_B p_\param(x) \dd{\bel}(\param)}{\marg(x)} \\
    & = \frac{\int_B \frac{p_\param(x)}{p_\param(y)} p_\param(y) \dd{\bel}(\param)}{\marg(y)} \cdot \frac{\marg(y)}{\marg(x)}
     \le \exp\{2L \xdist{x}{y} \} \bel(B \mid y) \enspace.
  \end{align*}

  For part~\ref{thm:dp2}, note that from Theorem~\ref{the:kl-2} that the KL divergence of the posteriors under assumption is bounded by \eqref{eq:kl-2}. Now, recall Pinsker's inequality~\citep[cf.][]{fedotov2003refinements}:
  \begin{equation}
    \label{eq:pinsker}
    \KL{Q}{P} \geq \frac{1}{2}\onenorm{Q-P}^2\enspace.
  \end{equation}
  This  yields:
  $\abs{\bel(B \mid x) - \bel(B \mid y)}
  \leq \sqrt{\frac{1}{2}\dist{\bel(\cdot \mid x)}{\bel(\cdot \mid y)}}
  \leq \sqrt{\frac{1}{2} \constFamily \left(1 + 2L_0 c^{-1}\right) \xdist{x}{y}}$.
\end{proofof}

\begin{proofof}{Lemma~\ref{lem:pac-utility}}
  Sampling $\nsamples$ times from the posterior, gives us the following estimate of the utility function $\hat{\util_\bel}(\query, \response) = \frac{1}{\nsamples} \sum_{\param \in \hat{\Params}} \util_\param(\query, \response),$
which with probability at least $1 - \delta$ satisfies $| \hat{\util_\bel}(q,r) - u(q,r) |  < \sqrt{\frac{\ln(2/\delta)}{2\nsamples}} = \epsilon$, $\forall r, q$, via Hoeffding's inequality and the boundedness of $\util$.
Consequently, we can be at most $2 \epsilon$-away from the optimal.
\end{proofof}

\begin{proofof}{Lemma~\ref{lem:l1-empirical-error}}
  (Note that in this proof, $\varepsilon, \delta$ do not refer to the privacy parameters.)
  We use the inequality due to~\citet{weissman2003inequalities} on the $\ell_1$ norm, which states that for any multinomial distribution $P$ with $m$ outcomes, the $\ell_1$ deviation of the empirical distribution $\hat{P}_n$ after $n$ draws from the multinomial satisfies:
  \begin{equation*}
    %\label{eq:weissman}
    \Pr\left(\onenorm{\hat{P}_n - P} \geq \varepsilon\right) \leq (2^m -2) e^{-\frac{1}{2} n \varepsilon^2},
    \qquad
    \forall \varepsilon > 0\enspace.
  \end{equation*}
  The right hand side is bounded by $e^{m \ln 2 - \frac{1}{2}n\varepsilon^2}$.
  Substituting $\varepsilon =  \sqrt{\frac{3}{n} \ln \frac{1}{\delta}}$:
  \begin{align*}
	  \Pr\left(\onenorm{\hat{P}_n - P}  \geq  \sqrt{\frac{3}{n} \ln \frac{1}{\delta}}\right)
	  &\leq
    e^{m \ln 2 - \frac{3}{2} \ln \frac{1}{\delta}}
    \\ \nonumber
    &\leq
    e^{\log_2 \sqrt{\frac{1}{\delta}} \ln 2 - \frac{3}{2} \ln \frac{1}{\delta}} \\ \nonumber
    &=
    e^{\frac{1}{2} \ln \frac{1}{\delta} - \frac{3}{2} \ln \frac{1}{\delta}} \nonumber \\
    &= \delta\enspace.
  \end{align*}
  where the second inequality follows from $m  \leq \log_2 \sqrt{1/\delta}$.
\end{proofof}

\begin{proofof}{Theorem~\ref{the:adversary-sample-complexity}}
  Recall that the data processing inequality states that, for any sub-algebra $\algebra$:
  \begin{align*}
    %\label{eq:data-processing}
    % \KL{Q_{|\algebra}}{P_{|\algebra}} & \leq \KL{Q}{P}\\
    \onenorm{Q_{|\algebra} - P_{|\algebra}} & \leq \onenorm{Q - P}\enspace.
  \end{align*}
  Using this and Pinsker's inequality \eqref{eq:pinsker} we obtain:
  \begin{align*}
    2L\xdist{x}{y}
    & \geq
%    2L\epsilon
%    \geq
    \KL{\bel(\cdot \mid x)}{\bel(\cdot \mid y)} \nonumber \\
    & \geq
    \frac{1}{2}\onenorm{\bel(\cdot \mid x) - \bel(\cdot \mid y)}^2 \nonumber
    \\
    &\geq
    \frac{1}{2}\onenorm{\bel_{|\algebra}(\cdot \mid x) - \bel_{|\algebra}(\cdot \mid y)}^2\enspace.
  \end{align*}
  On the other hand, due to \eqref{eq:l1-empirical-error} the adversary's $\ell_1$ error in the posterior distribution is bounded by $\sqrt{\frac{3}{n} \ln \frac{1}{\delta}}$ with probability $1 - \delta$. In order for him to be able to distinguish the two different posteriors, it must hold that
  \begin{equation*}
    \label{eq:necessary-distinguishability}
    \onenorm{\bel_{|\algebra}(\cdot \mid x) - \bel_{|\algebra}(\cdot \mid y)} \geq \sqrt{\frac{3}{n} \ln \frac{1}{\delta}}\; .
  \end{equation*}
 Using the above inequalities, we can bound the error in terms of the distinguishability of the real dataset $x$ from an arbitrary set $y$ as:
  \begin{equation*}
    4 L \xdist{x}{y} \geq \frac{3}{n} \ln \frac{1}{\delta}\enspace.
  \end{equation*}
  Rearranging, we obtain the required result. The second case is treated similarly to obtain:
  \begin{equation*}
    \left(\constFamily c^{-1} + \ln \constFamily\right) \xdist{x}{y} / 2 \geq \frac{3}{n} \ln \frac{1}{\delta}\enspace.
  \end{equation*}

\end{proofof}

\begin{proofof}{Lemma~\ref{lem:utility-bound}}
  Let $\response, \response^\star$ be the optimal responses under $\bel, \bel^\star$ respectively. For notational convenience, let $\util_\bel = \int_\Params \util_\param \dd \bel(\param)$ denote the expected utility under a belief $\bel$. Then our regret is
  \begin{align*}
    \util_{\bel}(\query, \response)
    -
    \util_{\bel}(\query, \response^\star)
    &=
    \util_{\bel}(\query, \response)
    -
    \util_{\bel^\star}(\query, \response)
    \\
    &+
    \util_{\bel^\star}(\query, \response)
    -
    \util_{\bel^\star}(\query, \response^\star)
    \\
    &+ \util_{\bel^\star}(\query, \response^\star)
    - \util_{\bel}(\query, \response^\star)
    \\
    &\leq
    2 \norm{\bel - \bel^\star}_1\enspace.
  \end{align*}
  This follows from the fact that
  \begin{align*}
    \util_{\bel}(\query, \response) - \util_{\bel^\star}(\query, \response)
    &=
    \int_\Params \util_{\param}(\query, \response) \dd [\bel - \bel^\star](\param)\\
    &\leq \|\util\|_\infty \|\bel -\bel^\star\|_1
  \end{align*}
  and then using the boundedness of $\util$. The third term is dealt with identically.
  For the second term, note that $\util_{\bel^\star}(\query, \response) - \util_{\bel^\star}(\query, \response^\star) \leq 0$ since $\response^\star$ maximises $\util_{\bel^\star}$.
\end{proofof}

\begin{proofof}{Lemma~\ref{lem:expected-kl-divergence}}
  Let $\marg^\star(x) = \int_\Params p_\param(x) \dd{\bel^\star}(x)$ be the prior marginal distribution. Then the $\bel^\star$-expected KL divergence between the two posteriors is
  \begin{align*}
    &\sum_x \int_\Params
    \ln \frac{\dd \bel^\star(\param \mid x)}{\dd \bel(\param \mid x)}
    \dd \bel^\star(\param \mid x) \marg^\star(x)
    \\
    &\leq
    \sum_x \int_\Params
    \left(
    \left|\ln \frac{\dd \bel^\star(\param)}{\dd \bel(\param)}\right|
    +
    \left|\ln \frac{\marg(x)}{\marg^\star(x)}\right|
    \right)
    \dd \bel^\star(\param \mid x) \marg^\star(x)
    \\
    &\leq 2\eta\enspace.
  \end{align*}
  The first term $\left|\ln \frac{\dd \bel^\star(\param)}{\dd \bel(\param)}\right|$ is bounded by $\eta$ by assumption. From the same assumption, it follows that
  $\marg(x) = \int_\Params p_\param(x) \dd{\bel}(\param)
  \leq \int_\Params p_\param(x) e^\eta \dd{\bel^\star}(\param) = e^\eta \marg^\star(x)$, and so the second term is also bounded by $\eta$.
\end{proofof}

%%% Local Variables:
%%% mode: latex
%%% TeX-master: "jmlr.tex"
%%% End:

\section{Proofs of Examples}
\label{sec:example-proofs}
\begin{proofof}{Lemma~\ref{lem:exponential}}
Since $Exp(x;\theta)$ is monotonic decreasing in $x$ and concave as a function of $\theta$, we have $\inf_{\{||x||\leq B,\theta\in[c_{1},c_{2}]\}} Exp(x;\theta)=\min\left\{c_{1}e^{-c_{1}B},c_{2}e^{-c_{2}B}\right\}\leq\phi(x)$. Then we have
$$\constFamily=c_{2}/\min\left\{c_{1}e^{-c_{1}B},c_{2}e^{-c_{2}B}\right\}\enspace.$$
Next we compute the absolute log-ratio
distance for any $x_1$ and $x_2$ according to the exponential
likelihood function:
\[
|\ln p_{\param}(x_1) - \ln p_{\param}(x_2)| = \param | x_1 - x_2 | \enspace.
\]
Thus, for $\theta\in [c_{1},c_{2}]$, under Assumption~\ref{ass:hoelder-measure-observations}, using $\xdist{x}{y}=|x-y|$, the set
of feasible parameters for any $L> c_{1}$ is $\Params_L = (c_{1},L)$. Note the density of the renormalized exponential prior on $[c_{1},c_{2}]$ is given by
$K\lambda e^{-\lambda \theta}$, where $K=(e^{-\lambda c_{1}}-e^{-\lambda c_{2}})^{-1}$. Thus the CDF at $L$ of this density is $K\left(e^{-\lambda c_{1}}-e^{-\lambda L}\right)$ for $L\in[c_{1},c_{2}]$ and $1$ for $L\geq c_{2}$. It is natural to choose $L_{0}$ to be $c_{1}$. Then we need to find $c$ such that
$$\xi(\Theta_{L})=\int^{L}_{c_{1}}K\lambda e^{-\lambda \theta}d\theta=K(e^{-\lambda c_{1}}-e^{-\lambda L})\geq 1- e^{-c(L-c_{1})}$$
for $L\in (c_{1},c_{2})$. By plugging $K$ into the inequality, we have
\begin{align*}
 e^{-c(L-c_{1})}\geq \frac{e^{-\lambda(L-c_{2})}-1}{e^{-\lambda(c_{1}-c_{2})}-1}\enspace.
\end{align*}
Since $e^{-\lambda(L-c_{2})}\leq e^{-\lambda(c_{1}-c_{2})}$, it is sufficiency to find $c$ such that $e^{-c(L-c_{1})}\geq e^{-\lambda(L-c_{1})}$. Therefore we can have $c=\lambda$.
%Therefore the assumption requires the prior to adequately support this range, but
%because the CDF at $L$ of the trimmed exponential prior with parameter $\lambda >0$
%is given by $1 -\exp(-\lambda L)$ for $L\in[c_{1},c_{2})$ and $1$ for $L\geq c_{2}$, every such prior satisfies the
%assumption with $c = \lambda$, $L_{0}=c_{1}$.
\end{proofof}

\begin{proofof}{Lemma~\ref{lem:laplace}}
Note that $Laplace(x;s,\mu)$ is monotonic decreasing in $x$ if $x< \mu$, and increasing in $x$ if $x\geq \mu$. Since $Laplace(x;s,\mu)$ is concave as a function of $s$,  we have $\phi(t)\geq\min\left\{\frac{1}{2c_{2}},\frac{1}{2c_{1}}\exp\left(\frac{-B-\mu}{c_{1}}\right)\right\}$ if $x<\mu$ and $\phi(t)\geq\min\left\{\frac{1}{2c_{2}},\frac{1}{2c_{1}}\exp\left(\frac{\mu-B}{c_{1}}\right)\right\}$ if $x\geq \mu$. Thus, we can take
$$\constFamily =\begin{cases} \frac{c_{2}}{2\min\left\{\frac{1}{2c_{2}},\frac{1}{2c_{1}}\exp\left(\frac{-B-\mu}{c_{1}}\right)\right\}}\enspace, & x < \mu \\
                    \frac {c_{2}}{2\min\left\{\frac{1}{2c_{2}},\frac{1}{2c_{1}}\exp\left(\frac{\mu-B}{c_{1}}\right)\right\}}\enspace, &   x\geq \mu
       \end{cases}\enspace.$$
For any $x_1$ and $x_2$, the
absolute log-ratio distance for this distribution can be bounded as
\begin{align*}
& |\ln p_{\mu,s}(x_1) - \ln p_{\mu,s}(x_2)| \\
= & \tfrac{1}{s}\left| \|x_1-\mu\| - \|x_2-\mu\| \right| \le \tfrac{1}{s}\|x_1-x_2\| \enspace,
\end{align*}
where the inequality follows from the triangle inequality on
$\| \cdot \|$. Thus, if we use $\xdist{x}{y}=\|x-y\|$, the set of
feasible parameters for
Assumption~\ref{ass:hoelder-measure-observations} is $\mu \in \Reals$
and $\frac{1}{s} =\theta \le L$. Again we can use the trimmed exponential prior with rate
parameter $\lambda>0$ for the inverse scale, $\frac{1}{s}$, and %any prior on
%$\mu$ to obtain the second part of Assumption~\ref{ass:hoelder-measure-observations}.
 similar to the previous example, Assumption~\ref{ass:hoelder-measure-observations} is satisfied with $c=\lambda$ and $L_{0}=c_{1}$.
 %The ratio between the maximum and marginal likelihoods here is same as the ratio in exponential conjugate prior example. These similarities
%are not surprising considering that if $X \sim Laplace(\mu,s)$ then
%$\|X-\mu\| \sim Exponential(\frac{1}{b})$.
\end{proofof}

\begin{proofof}{Lemma~\ref{lem:beta}}
Here, we consider data drawn from a Binomial distribution with a
beta prior on its proportion parameter, $\param$. Thus, the
likelihood and prior functions are
\begin{align*}
 p_{\param,n}(X=k) & = \tbinom{n}{k}
 \param^{k}(1-\param)^{n-k} \\
 \bel_0(\param) &= \tfrac{1}{B(a,b)}\param^{a-1}(1-\param)^{b-1}
 \enspace,
\end{align*}
where $k \in \{0,1,2,\ldots,n\}$, $a,b \in \Reals_+$ and $B(a,b)$ is
the beta function. The resulting posterior is a Beta-Binomial
distribution. Again we consider the application of
Assumption~\ref{ass:hoelder-measure-observations} to this
Beta-Binomial distribution. For this purpose, we must quantify the
parameter sets $\Params_L$ for a given $L > 0$ according to a distance
function. The absolute log-ratio distance between the Binomial
likelihood function for any pair of arguments, $k_1$ and $k_2$, is
\begin{align*}
 |\ln p_{\param,n}(k_1) - \ln p_{\param,n}(k_2)| =
 \left|\Delta_{n}(k_1,k_2) +
 (k_1 - k_2) \ln \tfrac{\param}{1-\param} \right|
\end{align*}
where $\Delta_{n}(k_1,k_2) \defn \ln \binom{n}{k_1} - \ln
\binom{n}{k_2}$. By substituting this distance into the supremum of
Eq.~\eqref{eq:hoelder-measure-set}, we seek feasible values of $L > 0$
for which the supremum is non-negative; here, we explore the case where
$\xdist{(n, k_1)}{(n, k_2)} \defn |k_1 - k_2|$.
Without loss of generality, we assume $k_1 > k_2$, and
thus require that
\begin{equation}
  \sup_{k_1 > k_2}{\left|\tfrac{\Delta_{n}(k_1,k_2)}{k_1 - k_2} + \ln \tfrac{\param}{1-\param} \right|} \le
  L
  \enspace.
  \label{eq:bino-bound}
\end{equation}
However, by the definition of $\Delta_{n}(k_1,k_2)$, the ratio
$\frac{\Delta_{n}(k_1,k_2)}{k_1 - k_2}$ is in fact the slope of the
chord from $k_2$ to $k_1$ on the function $\ln \binom{n}{k}$. Since
the function $\ln \binom{n}{k}$ is concave in $k$, this slope
achieves its maximum and minimum at its boundary values; \ie it is
maximised for $k_1=1$ and $k_2=0$ and minimised for $k_1=n$ and
$k_2=n-1$. Thus, the ratio attains a maximum value of $\ln n$ and a
minimum of $- \ln n$ for which the above supremum is simply $\ln n +
\left|\ln \frac{\param}{1-\param} \right|$. From Eq.~\eqref{eq:bino-bound}, we therefore have, for all $L \ge  \ln n$:
\begin{align*}
 \Params_L &= \left[ \left(1+\tfrac{e^{L}}{n}\right)^{-1},
   \left(1+\tfrac{n}{e^{L}}\right)^{-1}\right]
 \enspace.
\end{align*}

We want to bound $\bel(\Theta_{L})$. We know that:
$\bel(\Theta_{L})=1-\bel\left(\Theta^\complement_L\right)$ where $\Theta^\complement_L$ is the complement of $\Theta_L$.  so $\bel(\Theta^\complement_L)$ is composed of two symmetric intervals:
$\left[0, \left(1+\frac{e^L}{n}\right)^{-1}\right)$ and $\left(\left(1 + \frac{n}{e^L}\right)^{-1}, 1\right]$. We selected $\alpha = \beta$, therefore the mass must concentrate at $\frac{1}{2}$, as we have $\alpha > 1$.

Due to symmetry, the mass outside of $\Theta_L$ is two times that is the first interval. This is:
\[
\frac{2}{B(\alpha,\alpha)} \int_{0}^\frac{p}{1+p}   x^{\alpha-1} (1-x)^{\alpha-1} \dd x\; .
\]
where $p$ denotes $ne^{-L}\in [0,1]$,
Therefore $c$ is upper bounded by
$$\ln\left(\frac{2A(p)}{B(\alpha,\alpha)}\right)/(L_{0}-L)=\ln\left(\frac{2A(p)}{B(\alpha,\alpha)}\right)/\ln{p},$$
where $A(p)$ denotes the incomplete Beta function $\int_{0}^{\frac{p}{1+p}}x^{\alpha-1}(1-x)^{\alpha-1}dx$.
Note that we have
$$A'(p)=\frac{p^{\alpha-1}}{(1+p)^{2\alpha}}\; ,$$
$$A''(p)=\frac{p^{\alpha-2}[(\alpha-1)(1+p)-2\alpha p]}{(1+p)^{2\alpha+1}}\; .$$
\begin{claim}
$H(p)=\alpha A(p)-\frac{p^{\alpha}}{(1-p)(1+p)^{2\alpha-1}}\leq 0$ for all $p\in(0,1)$.
\end{claim}
\begin{proof}Calculating derivatives and simplifying
\begin{eqnarray*}
	&& H'(p)\\
	&=&\alpha A'(p)- \frac{\alpha p^{\alpha-1}(1-p)(1+p)^{2\alpha-1}-p^{\alpha}\left[(2\alpha-1)(1-p)(1+p)^{2\alpha-2}-(1+p)^{2\alpha-1}\right]}{[(1-p)(1+p)^{2\alpha-1}]^{2}}\nonumber\\
&=&\frac{\alpha p^{\alpha-1}}{(1+p)^{2\alpha}}-\frac{\alpha p^{\alpha-1}(1-p)(1+p)-p^{\alpha}[(2\alpha-1)(1-p)-(1+p)]}{(1-p)^{2}(1+p)^{2\alpha}}\nonumber\\
&=&\frac{p^{\alpha-1}}{(1+p)^{2\alpha}}\left(\alpha-\frac{\alpha(1-p^{2})-2p(\alpha-1-p\alpha)}{(1-p)^{2}}\right)\nonumber\\
&=&\frac{p^{\alpha-1}}{(1+p)^{2\alpha}(1-p)^{2}}\left(\alpha(1-2p+p^{2})-\alpha(1-p^{2})+2p(\alpha-1-\alpha p)\right)\nonumber\\
&=&\frac{-2p^{\alpha}}{(1+p)^{2\alpha}(1-p)^{2}}<0\; .\nonumber
\end{eqnarray*}
Therefore $H(p)$ is strictly decreasing. Then combined with $H(0)=0$, we claim follows.
\end{proof}
\begin{claim}
$G(p)=p\frac{A'(p)}{A(p)}\ln{p}-\ln{\frac{2A(p)}{B(\alpha,\alpha)}}<0$ for all $p\in(0,1)$.
\end{claim}
\begin{proof}Again taking derivatives
\begin{eqnarray*}
G'(p)&=&\frac{A'(p)}{A(p)}(1+\ln{p})+p\ln{p}\frac{A''(p)A(p)-A'(p)^{2}}{A(p)^2}-\frac{A'(p)}{A(p)}\\
&=&\frac{\ln{p}}{A(p)^{2}}(A(p)A'(p)+pA''(p)A(p)-pA'(p)^{2})\\
&=&\frac{\ln{p}}{A(p)^{2}}\left[\frac{p^{\alpha-1}}{(1+p)^{2\alpha}}A(p)\left(1+\frac{(\alpha-1)(1+p)-2\alpha p}{1+p}\right)-\frac{p^{2\alpha-1}}{(1+p)^{4\alpha}}\right]\\
&=&\frac{\ln{p}}{A(p)^{2}}\frac{p^{\alpha-1}}{(1+p)^{2\alpha+1}}\left[\alpha(1-p)A(p)-\frac{p^{\alpha}}{(1+p)^{2\alpha-1}}\right]\\
&=&\frac{p^{\alpha-1}}{(p+1)^{2\alpha+1}A(p)^{2}}H(p)\ln{p}(1-p)>0\; .
\end{eqnarray*}
So $G(p)$ is strictly increasing. Combined with $\lim_{p\rightarrow1}G(p)=0$, the claim follows.
\end{proof}
\begin{claim}
	$F(p)=\ln\left(2I_{\frac{p}{1+p}}(\alpha)\right)/\ln{p}$ is decreasing in $p\in (0,1)$, where the incomplete Beta function $I_{\frac{p}{1+p}}(\alpha)=A(p)/B(\alpha,\alpha)$.
\end{claim}
\begin{proof}Taking derivatives
\begin{eqnarray*}
F'(p)&=&\frac{1}{\ln^{2}{p}}\left(\frac{A'(p)}{A(p)}\ln{p}-\frac{1}{p}\ln{\frac{2A(p)}{B(\alpha,\alpha)}}\right)\\
&=&\frac{1}{p\ln^{2}{p}}\left(\frac{A'(p)}{A(p)}p\ln{p}-\ln{\frac{2A(p)}{B(\alpha,\alpha)}}\right)\\
&=&\frac{1}{p\ln^{2}{p}}G(p)<0\; .
\end{eqnarray*}
\end{proof}
Therefore $\ln\left(2I_{\frac{p}{1+p}}(\alpha)\right)/\ln{p}$ is monotonic decreasing in $p$.
Thus the minimum value of $F(p)$ is $\frac{1}{B(\alpha)2^{2\alpha-1}}$ as $p \rightarrow 1$, which we can take as our $c$ in this example.
%Then Assumption 2 is satisfied by taking $L_{0}$ as $\ln{n}$ and $c$ as $2^{-2\alpha+1}/B(\alpha)$.

Let us consider $\constFamily$ for this example.
We have
%$$\phi(x)=\binom{n}{k}\frac{B(\alpha+x, n-x+\beta){B(\alpha, \beta)},$$
%then we have
$$\frac{p_{\theta}(x)}{\phi(x)}=\frac{B(\alpha, \beta)\theta^{x}(1-\theta)^{n-x}}{B(\alpha+x,n+\beta-x)}\; ,$$
where $\theta\in [0,1]$ and $x\in [0,1, \ldots, n]$.
Note that
$$\frac{B(\alpha+x+1, n+\beta-x-1)}{B(\alpha+x, n+\beta-x)}=\frac{\Gamma(\alpha+x+1)\Gamma(n+\beta-x-1)}{\Gamma(\alpha+x)\Gamma(n+\beta+1)}=\frac{\alpha+x}{n+\beta-x-1}\enspace.$$
So $B(\alpha+x+1, n+\beta-x-1)\leq B(\alpha+x, n+\beta-x)$ if $x\leq \frac{n+\beta-\alpha-1}{2}$; $B(\alpha+x+1, n+\beta-x-1)> B(\alpha+x, n+\beta-x)$ otherwise. Thus
$$B(\alpha+x, n+\beta -x)\geq B\left(\frac{n+\alpha+\beta-1}{2},\frac{n+\alpha+\beta+1}{2}\right)\; .$$
Hence we can take $\constFamily=B(\alpha,\beta)/B\left(\frac{n+\alpha+\beta-1}{2},\frac{n+\alpha+\beta+1}{2}\right)$.
%Hence, the feasible set of parameters is a symmetric interval
%about $\frac{1}{2}$ for all $L \ge \ln n$. Because of the shape of
%this interval, it is natural to choose a beta prior with $a = b > 1$,
%which concentrates at $\frac{1}{2}$. Thus, for any $n$, $L$, and
%$c$, one can find a prior parameter $a$ that meets the second
%condition of Assumption~\ref{ass:hoelder-measure-observations}.
\end{proofof}

\begin{proofof}{Lemma~\ref{lem:normal}}
Since $N(x;\mu, \theta)$ is decreasing in $x^{2}$ and concave as a function of $\theta$. We have $\phi(t)\geq\inf_{\{x\mid||x||\leq B\},\theta\in[c_{1},c_{2}]}N(x;\mu, \theta)=\min\left\{\sqrt{\frac{c_{1}}{2\pi}}e^{\frac{-c_{1}c^{2}_{2}}{2}},\sqrt{\frac{c_{2}}{2\pi}}e^{\frac{-c^{3}_{2}}{2}}\right\}$. Then we can take
$$\constFamily=\min\left\{\sqrt{c_{2}/c_{1}}e^{\frac{c_{1}c^{2}_{2}}{2}}, e^{\frac{c^{3}_{2}}{2}}\right\}$$
For the normal distribution,~\eqref{eq:hoelder-measure-set} requires:
$2L\xdist{x}{y}  \sigma^2 \geq \left | 2\mu -x-y   \right| \left| x-y\right|$.
Taking the absolute log ratio of the Gaussian densities we have
\begin{align*}
& \frac{1}{2\sigma^2} \abs{\left((x - \mu)^2 - (y-\mu)^2\right)} \\
%=&
%\frac{1}{2\sigma^2} \abs{x^2 - y^2 - 2 \mu x + 2 \mu y }
%\\
%\leq &
%\frac{1}{2\sigma^2} \left(\abs{x^2 - y^2} + 2 |\mu| \abs{x-y}\right)
%\\
\leq&
\frac{\max\set{|\mu|, 1}}{2\sigma^2} \left(\abs{x^2 - y^2} + 2 \abs{x-y}\right).
\end{align*}
Consequently, we can set $\rho(x, y) = \abs{x^2 - y^2} + 2 \abs{x-y}$ and $L(\mu, \sigma) = \frac{\max\set{|\mu|, 1}}{2\sigma^2}$. Again, the trimmed exponential prior is given by $K\lambda e^{-\lambda \theta}$, where $K=(e^{-\lambda c_{1}}-e^{-\lambda c_{2}})^{-1}$. Thus the CDF at $L$ of this density is $K\left(e^{-\lambda c_{1}}-e^{-\lambda L}\right)$ for $L\in[\frac{c_{1}\max\set{|\mu|, 1}}{2},\frac{c_{2}\max\set{|\mu|, 1}}{2}]$ and $1$ for $L\geq \frac{c_{2}\max\set{|\mu|, 1}}{2}$. Thus the CDF at $L$ of this density is $K\left(e^{-\lambda c_{1}}-e^{
\frac{-2\lambda L }{\max\set{|\mu|, 1}}}\right)$. We choose $L_{0}$ to be $\frac{c_{1}\max\set{|\mu|, 1}}{2}$. Then we need to find $c$ such that
$$\xi(\Theta_{L})=\int^{L}_{c_{1}}K\lambda e^{-\lambda \theta}d\theta=K(e^{-\lambda c_{1}}-e^{-\lambda L})\geq 1- e^{-c\left(L-\frac{c_{1}\max\set{|\mu|, 1}}{2}\right)}.$$
By plugging $K$ into the inequality, we have
\begin{align*}
 e^{-c\left(L-\frac{c_{1}\max\set{|\mu|, 1}}{2}\right)}\geq \frac{e^{\frac{-2\lambda L}{\max\set{|\mu|, 1}}+\lambda c_{2}}-1}{e^{-\lambda(c_{1}-c_{2})}-1}.
\end{align*}
Since $e^{-\lambda\left(\frac{2\lambda L}{\max\set{|\mu|, 1}}-c_{2}\right)}\leq e^{-\lambda(c_{1}-c_{2})}$, it is sufficiency to find $c$ such that $$e^{-c\left(L-\frac{c_{1}\max\set{|\mu|, 1}}{2}\right)}\geq e^{-\lambda\left(\frac{2L}{\max\set{|\mu|, 1}}-c_{1}\right)}.$$
This is equivalent to have $c$ satisfying
$$c\left(L-\frac{c_{1}\max\set{|\mu|, 1}}{2}\right)\leq \lambda\left(\frac{2L}{\max\set{|\mu|, 1}}-c_{1}\right).$$
Then we can take $c=\frac{2\lambda}{\max\set{|\mu|, 1}}$ to satisfy the above inequality.
%OLD(Trim by projecting the mass to end points):
%Therefore with a trimmed exponential prior on its precision $\theta=1/\sigma^{2}$ we have $\xi(\theta_{L})=0$ for $L< \frac{c_{1}\max\set{|\mu|, 1}}{2}$, $\xi(\theta_{L})=1-\exp\left(\frac{-2\lambda L}{\max\set{|\mu|, 1}}\right)$ for $\frac{c_{1}\max\set{|\mu|, 1}}{2}\leq L\leq\frac{c_{1}\max\set{|\mu|, 1}}{2}$ and $\xi(\theta_{L})=1$ for $L\geq \frac{c_{2}\max\set{|\mu|, 1}}{2}$.
%Thus the assumption is satisfied with $c=\frac{2\lambda}{\max\set{|\mu|, 1}}$ and $L_{0}=\frac{c_{1}\max\set{|\mu|, 1}}{2}$.
\end{proofof}

\begin{proofof}{Lemma~\ref{lem:multinormal}}
Consider the likelihood log-ratio distance of multivariate normal distributions with precision matrix $A$:
$$\frac{1}{2}|x^{\top}A x-y^{\top}Ay|\enspace,$$
where $A$ is positive definite with eigenvalues $\lambda_{1}\geq \ldots \geq \lambda_{n}> 0$).
For simplicity, assume the mean to be a zero vector then
\begin{align*}
	|x^{\top}Ax-y^{\top}Ay|&=\left|\sum_{i,j}x_{i}x_{j}A_{i,j}-\sum_{i,j}y_{i}y_{j}A_{i,j}\right|\\
&=\left|\sum_{i,j}A_{i,j}(x_{i}x_{j}-y_{i}y_{j})\right| \\
&=|Tr(A(xx^{\top}-yy^{\top})')|\\
&\leq [Tr(A^{2})Tr((xx^{\top}-yy^{\top})(xx^{\top}-yy^{\top})')]^{\frac{1}{2}} \\
&=\left(\sum^{n}_{i=1}\lambda^{2}_{i}\right)^{\frac{1}{2}}||(xx^{\top}-yy^{\top})||_{F}\enspace.
\end{align*}

For mean equal to $\mu$, we have
$$\frac{1}{2}|(x^{\top}-\mu)A (x-\mu)-(y^{\top}-\mu)A(y-\mu)|\enspace.$$
%Let $x-\mu=z_{1}$ and $y-\mu=z_{2}$, b
By the above analysis we have the difference being bounded by
\begin{align*}
\frac{1}{2}\left(\sum^{n}_{i=1}\lambda^{2}_{i}\right)^{\frac{1}{2}}||(x-\mu)(x-\mu)'-(y-\mu)(y-\mu)')||_{F}\enspace.
\end{align*}
Note that
\begin{align*}
||(x-\mu)(x-\mu)'-(y-\mu)(y-\mu)')||_{F}=&||xx^{\top}-\mu(x^{\top}-y^{\top})-(x-y)\mu'-yy^{\top}||_{F}\\
\leq& ||xx^{\top}-yy^{\top}||_{F}+2||\mu(x-y)'||_{F}\\
=& ||xx^{\top}-yy^{\top}||_{F}+2||\mu||_{2}||(x-y)'||_{2}\\
\leq& \max\{1, ||\mu||_{2}\}(||xx^{\top}-yy^{\top}||_{F}+2||x-y||_{2})\enspace.
\end{align*}
\end{proofof}

\begin{proofof}{Lemma~\ref{lem:dbn}}
It is instructive to first examine the case where all variables are independent and we have a single draw from $P_\param$. Then
$P_\param(x) = \prod_{k=1}^K \param_{k, x_k}$ and
\begin{align}
\left|\ln \frac{P_\param(x)}{P_\param(y)}\right|
& =
\left|\ln \prod_{k=1}^K \frac{\param_{k, x_k}}{\param_{k, y_k}}\right|
 \leq
\sum_{k=1}^K
\left|\ln \frac{\param_{k, x_k}}{\param_{k, y_k}}\right|
\ind{x_k \neq y_k}
 \leq
\max_{i, j, k} \left|\ln \frac{\param_{k, i}}{\param_{k, j}}\right| \xdist{x}{y}\enspace.
\label{eq:bound-naive-bayes}
\end{align}
Consequently, if $\varepsilon \defn \min_{k,j} \param_{k.j}$ is the smallest probability assigned to any one sub-event, then $L > \ln 1 / \varepsilon$, since $\param_{k,j} \leq 1$.

In the general case, we have independent draws $x^t, y^t$, where $x^t \sim P_\param(x)$ and the variables $x^t_k$ have dependences defined through a graphical model, such that $P_\param(x) = \prod_k P_\param(x_k \mid x_{\Parents{k}})$, where $\Parents{k}$ are the parents of node $k$.
Similarly to \eqref{eq:bound-naive-bayes}, we write
\begin{align}
\left|\ln \frac{P_\param(x)}{P_\param(y)}\right|
& =
\left|\ln \prod_t \frac{P_\param(x^t)}{P_\param(y^t)}\right|
=
\left|\ln \prod_t \prod_k \frac{P_\param(x^t_k \mid x^t_{\Parents{k}})}{P_\param(y^t_k \mid y^t_{\Parents{k}})}\right| \nonumber
\\
& \leq
\sum_{t.k} \left|\ln \frac{P_\param(x^t_k \mid x^t_{\Parents{k}})}{P_\param(y^t_k \mid y^t_{\Parents{k}})}\right|
 \leq
\ln \frac{1}{\epsilon} \sum_{t.k} \ind{x_k^t \neq y_k^t \vee x_{\Parents{k}}^t \neq y_{\Parents{k}}^t}.
\label{eq:bound-dbn}
\end{align}
The last term is the number of times a value is different in $x$ and
$y$ times one plus the number of variables it affects. To model this,
let $v \in \Naturals^K$ be such that $v_k = 1 + \degree{k}$ and
define: $\xdist{x}{y} \defn v^\top \delta(x, y)$ and
$\delta_k(x,y) \defn \sum_{t} \ind{x_{k,t} \neq y_{k,t}}$.
Rewriting~\eqref{eq:bound-dbn} in terms of $\xdistChar$, we obtain $\left|\ln \frac{P_\param(x)}{P_\param(y)}\right| \leq \ln \frac{1}{\varepsilon} \cdot \xdist{x}{y}$ as desired.
\end{proofof}

%%% Local Variables:
%%% mode: latex
%%% TeX-master: "jmlr"
%%% End:

%\bibliographystyle{splncsnat}
\bibliography{references}

\end{document}